\documentclass{article}
\usepackage{hyperref}
\usepackage{amsmath}
\usepackage{amsfonts}
\usepackage{amssymb}
\usepackage{pdfsync}
\usepackage[round]{natbib}
\usepackage{fullpage}
\newif\ifdraft \drafttrue
\newif\iffull \fulltrue

\usepackage{algorithmic}
\usepackage[ruled]{algorithm2e}
\usepackage{amsthm}
\newtheorem{theorem}{Theorem}
\newtheorem{lemma}[theorem]{Lemma}
\newtheorem{corollary}{Corollary}[theorem]
\usepackage{bbm}
\newtheorem*{remark}{Remark}

\theoremstyle{definition}
\newtheorem{definition}{Definition}[section]
\usepackage{graphicx}
\usepackage{xcolor}
\usepackage[english]{babel}

\usepackage{xspace}
\usepackage[utf8]{inputenc}
\usepackage[T1]{fontenc}

\usepackage{color-edits}

\usepackage{caption}
\usepackage{subcaption}

\usepackage{microtype}
\usepackage{graphicx}
\usepackage{booktabs} 
\usepackage{hyperref}
\usepackage{cleveref}
\addauthor{sw}{blue}

\DeclareMathOperator*{\argmin}{arg\,min}
\DeclareMathOperator*{\argmax}{arg\,max}
\DeclareMathOperator{\Lap}{Lap}
\DeclareMathOperator{\separator}{sep}

\newcommand{\sepQ}{\separator(\cQ)}
\newcommand{\bbE}{\mathbb{E}}
\newcommand{\cA}{\mathcal{A}}

\newcommand{\cQ}{\mathcal{Q}}
\newcommand{\cM}{\mathcal{M}}
\newcommand{\cX}{\mathcal{X}}
\newcommand{\cY}{\mathcal{Y}}
\newcommand{\cZ}{\mathcal{Z}}
\newcommand{\cL}{\mathcal{L}}
\newcommand{\cR}{\mathcal{R}}
\newcommand{\bD}{\mathbbm{D}}

\newcommand{\Ex}[1]{\mathbb{E}\left[ #1 \right]}
\newcommand{\E}[2]{\mathbb{E}_{#1}\left[ #2 \right]}
\newcommand{\inner}[2]{\left\langle #1, #2 \right\rangle}

\newcommand{\pr}[1]{\text{Pr}\left[ #1 \right]}
\newcommand{\algoA}{\text{Non-Convex-FTPL }}
\newcommand{\algoB}{\text{Separator-FTPL }}
\newcommand{\regretA}{R_{\text{NC}}(T)}
\newcommand{\regretB}{R_{\text{Sep}}(T)}
\newcommand{\DQ}{\mbox{{\sf DualQuery}}\xspace}

\newcommand{\eps}{\varepsilon}
\newcommand{\dqrs}{\mbox{{\sf DQRS}}\xspace}
\newcommand{\dq}{\mbox{{\sf DualQuery}}\xspace}
\newcommand{\mwem}{\mbox{{\sf MWEM}}\xspace}
\newcommand{\fem}{\mbox{\sf FEM}\xspace}
\newcommand{\sfem}{\mbox{{\sf sepFEM}}\xspace}
\newcommand{\hdmm}{\mbox{{\sf HDMM}}\xspace}
\newcommand{\actions}{\cA}

\newcommand{\data}{\mathrm{data}}
\newcommand{\query}{\mathrm{qry}}

\usepackage{tabularx}







\allowdisplaybreaks
\begin{document}



\title{New Oracle-Efficient Algorithms for Private Synthetic Data Release}

\author{Giuseppe Vietri\thanks{Department of Computer Science and Engineering, University of Minnesota. Supported by the GAANN fellowship from the U.S. Department of Education.} \and
Grace Tian \thanks{Harvard University.}
\and Mark Bun\thanks{Boston University. Supported by NSF grant CCF-1947889. Part of this work was done at the Simons Institute for the Theory of Computing, supported by a Google Research Fellowship.}\and
Thomas Steinke\thanks{IBM Research.}\and
Zhiwei Steven Wu\thanks{Department of Computer Science and Engineering, University of Minnesota. Supported in part by a Google Faculty Research Award, a J.P. Morgan Faculty Award, a Mozilla research grant, and a Facebook Research Award.}}
\date{}
\maketitle


\begin{abstract}
We present three new algorithms for constructing differentially private synthetic data---a sanitized version of a sensitive dataset that approximately preserves the answers to a large collection of statistical queries. All three algorithms are \emph{oracle-efficient} in the sense that they are computationally efficient when given access to an optimization oracle. Such an oracle can be implemented using many existing (non-private) optimization tools such as sophisticated integer program solvers. While the accuracy of the synthetic data is contingent on the oracle's optimization performance, the algorithms satisfy differential privacy even in the worst case. For all three algorithms, we provide theoretical guarantees for both accuracy and privacy. Through empirical evaluation, we demonstrate that our methods scale well with both the dimensionality of the data and the number of queries. Compared to the state-of-the-art method High-Dimensional Matrix Mechanism \cite{McKennaMHM18}, our algorithms provide better accuracy in the large workload and high privacy regime (corresponding to low privacy loss $\eps$).
\end{abstract}


\section{Introduction}

The wide range of personal data collected from individuals has
facilitated many studies and data analyses that inform decisions
related to science, commerce, and government policy. Since many of
these rich datasets also contain highly sensitive personal
information, there is a tension between releasing useful information
about the population and compromising the privacy of individuals. In
this work, we consider the problem of answering a large collection of
statistical (or linear) queries subject to the constraint of
differential privacy. Formally, we consider a data domain
$\cX=\{0,1\}^d$ of dimension $d$ and a dataset $D \in \cX^n$
consisting of the data of $n$ individuals. Our goal is to
approximately answer a large class of statistical queries $\cQ$ about
$D$. A statistical query is defined by a predicate
$\phi \colon \cX \rightarrow [0,1]$, and the query
$q_\phi:\cX^n\rightarrow [0,1]$ is given by
$q_\phi(D) = \frac{1}{n}\sum_{i =1}^n \phi(D_i)$ and an approximate answer
$a\in [0,1]$ must satisfy $|a-q_\phi(D)| \le \alpha$ for some accuracy
parameter $\alpha>0$. To preserve privacy we work under the constraint
of differential privacy \citep{DworkMNS06}.  Privately answering
statistical queries is at the heart of the 2020 US Census release
\citep{Abowd18} and provides the basis for a wide range of private data
analysis tasks. For example, many machine learning algorithms can be
simulated using statistical queries~\citep{Kearns98}.

An especially compelling way to perform private query release is to
release \emph{private synthetic data} -- a sanitized version of the
dataset that approximates all of the queries in the class
$\cQ$. Notable examples of private synthetic data algorithms are the
SmallDB algorithm \citep{BlumLR08} and the private multiplicative
weights (PMW) mechanism~\citep{HR10} (and its more practical variant
the multiplicative weights exponential
mechanism \mwem~\citep{HardtLM12}), which can answer exponentially
many queries and achieves nearly
optimal sample complexity~\citep{BunUV18}. Unfortunately, both
algorithms involve maintaining a probability distribution over the
data domain $\cX=\{0,1\}^d$, and hence suffer exponential (in $d$) running time. Moreover, under standard cryptographic assumptions,
this running time is necessary in the worst
case~\citep{Ullman16,UllmanV11}. However, there is hope that these worst-case intractability results do not apply to real-world datasets.

To build more efficient solutions for constructing private synthetic
data, we consider \emph{oracle efficient} algorithms that rely on a
black-box optimization subroutine. The optimization problem is NP-hard
in the worst case. However, we invoke practical optimization
heuristics for this subroutine (namely integer program solvers such as
CPLEX and Gurobi). These heuristics work well on many real-world
instances.  Thus the algorithms we present are more practical than the
worst-case hardness would suggest is possible.  While the efficiency
and accuracy of our algorithms are contingent on the solver's
performance, differential privacy is guaranteed even if the solver
runs forever or fails to optimize correctly.

\paragraph{Overview of our results.} To describe our algorithms, we
will first revisit a formulation of the query release problem as a
zero-sum game between a data player who maintains a distribution
$\widehat D$ over $\cX$ and a a query player who selects queries from
$\cQ$~\citep{HsuRU13,gaboardi2014dual}. Intuitively, the data player aims
to approximate the private dataset $D$ with $\widehat D$, while the query
player tries to identify a query which distinguishes between $D$ and
$\widehat D$. Prior work~\cite{HsuRU13,gaboardi2014dual} showed that any
(approximate) equilibrium for this game gives rise to an accurate
synthetic dataset. To study the private equilibrium computation within
this game, we consider a \emph{primal} framework and a \emph{dual}
framework that enable us to unify and improve on existing algorithms.

In the {primal framework}, we perform the equilibrium computation via
the following \emph{no-regret dynamics}: over rounds, the data player
updates its distribution $\widehat D$ using a no-regret online learning algorithm,
while the query player plays an approximate best response. The
algorithm \mwem in prior work falls under the primal framework with
the data player running the multiplicative weights (MW) method as the
no-regret algorithm, and the query player privately responding using the
exponential mechanism \citep{mcsherry2007mechanism}. However, since the MW method maintains an entire
distribution over the  domain $\cX$, \mwem runs in exponential
time even in the best case. To overcome this intractability, we
propose two new algorithms \fem and \sfem that follow the same
no-regret dynamics, but importantly replace the MW method with two
variants of the follow-the-perturbed-leader (FTPL) algorithm
\citep{KalaiV05}---\algoA \citep{online2019suggala} and \algoB
\citep{syrgkanis2016efficient}---both of which solve a perturbed
optimization problem instead of maintaining an exponential-sized
distribution. \fem achieves an error rate of
$$\alpha=\widetilde O\left(d^{3/4} \log^{1/2}|\cQ| / n^{1/2}\right),$$ and \sfem achieves a
slightly better rate of
$$\alpha=\widetilde O\left( d^{5/8} \log^{1/2}|\cQ| / n^{1/2}\right),$$ although
the latter requires the query class $\cQ$ to have a structure called a
small separator set. In contrast, \mwem attains the error rate
$\alpha=\widetilde O\left(d^{1/4}
  \log^{1/2}|\cQ|/n^{1/2}\right)$. Although the accuracy analysis
requires repeated sampling from the FTPL distribution (and thus
repeatedly solving perturbed integer programs), our experiments show
that the algorithms remain accurate even with a much lower number of
samples, which allows much more practical running time.

We then consider the \emph{dual} formulation and improve upon the
existing algorithm \dq~\citep{gaboardi2014dual}.  Unlike \mwem, \dq
has the query player running MW over the query class $\cQ$, which is
often significantly smaller than the data domain $\cX$, and has the
data player playing best response, which can be computed non-privately
by solving an integer program. Since the query player's MW
distribution is a function of the private data, \dq privately
approximates this distribution with a collection of samples drawn from
it. Each draw from the MW distribution can be viewed as a single
instantiation of the exponential mechanism, which provides a bound on
the privacy loss. We improve \dq by leveraging the observation that
the MW distribution changes slowly between rounds in the no-regret
dynamics. Thus can reuse previously drawn queries to approximate the
current MW distribution via rejection sampling. By using this
technique, our algorithm \dqrs (\dq with rejection sampling) reduces
the number of times we draw new samples from the MW distribution and
also the privacy loss, and hence improves the privacy-utility
trade-off. We theoretically demonstrate that \dqrs improves the
accuracy guarantee of \dq. Specifically \dqrs~attains accuracy
$$\alpha = \widetilde{O}\left(\frac{\log(|\cX|/\beta) \cdot
    \log^{3}(|\cQ|)}{n^{2} }\right)^{1/5}$$ whereas \DQ~attains
accuracy
$\alpha = \widetilde{O}\left(\frac{\log(|\cX|/\beta) \cdot
    \log^{3}(|\cQ|) }{n^{2} }\right)^{1/6}$. Even though the dual
algorithms \dq~and \dqrs~have worse accuracy performance than the
primal algorithms \fem~and \sfem, the dual algorithms run
substantially faster, since they make many fewer oracle calls. Thus we
observe a tradeoff not only between privacy and utility but also with
computational resources.


In addition to our theoretical guarantees, we perform a comprehensive
experimental evaluation of our algorithms. As a benchmark, we use
the state-of-the-art High-Dimensional Matrix Mechanism
(HDMM)~\citep{McKennaMHM18}; HDMM is being deployed in practice by the US Census Bureau \citep{Kifer19}. We perform our experiments with the standard ADULT and LOANS datasets and use $k$-way conjunctions as a query workload. We compare both algorithms on different workload sizes and different privacy levels.  Our experiments show that as we increase the workload size \fem performs better compared to \hdmm. Similarly, \fem~ does better when we increase the privacy level.
These results support our theoretical analysis.
%

\begin{table}
\caption{Error bound Comparison}
\begin{center}
\begin{sc}
\begin{tabular}{lccl}
\toprule
Algorithm & $\alpha$ \\
\midrule
 \mwem &  $O\left(\frac{d^{1/4}\log^{1/2}|\cQ| \log^{1/2}(1/\delta)}{n^{1/2}\varepsilon^{1/2}}\right)$  \\
 \DQ
  & $ O \left( \frac{d^{1/6} \log^{1/2}{|\cQ|} \log^{1/6}(1/\delta)}{n^{1/3}		\varepsilon^{1/3}}\right) $ \\
 \fem
  & ${O} \left( \frac{d^{3/4} \log^{1/2} |\cQ| \log^{1/2}(1/\delta)} {n^{1/2} \varepsilon^{1/2}} \right)$\\
\sfem
  & $O \left( \frac{d^{5/8} \log^{1/2} |\cQ| \log^{1/2}(1/\delta)} {n^{1/2} \varepsilon^{1/2}} \right)$ \\
 \dqrs
 &  ${O} \left(\frac{d^{1/5}\log^{3/5}|\cQ|\log^{1/5}(1/\delta)}{n^{2/5}\eps^{2/5}} \right)$ \\
  \bottomrule
\end{tabular}
\end{sc}
\end{center}
Parameters: $(\eps,\delta)$-differential privacy, $n$ data points of dimension $d$, query class $\cQ$, accuracy $\alpha$.
\end{table}

\subsection{Additional related work}

Aside from the aforementioned \dq\ algorithm~\citep{gaboardi2014dual},
several works on differentially private query release and synthetic
data generation are described in, or can be placed in, the framework
of oracle-efficient algorithms. One example is the Projection
Mechanism~\citep{NikolovTZ13} and extensions
thereof~\citep{Nikolov15,DworkNT15,BlasiokBNS19} in which each
projection step can be approximately implemented via a non-private
optimization subroutine. This line of work focuses on the average
error over the queries, rather than the maximum error as we do.

The notion of oracle-efficiency for differential privacy was
formalized in a recent work of~\citet{NeelRW19} who introduced
techniques for oracle-efficient private synthetic data generation even
for exponentially large classes of queries. A more recent work by \citet{NRVW} provides oracle-efficient methods for privately solving certain classes of non-convex optimization problems. In both \citet{NeelRW19} and \citet{NRVW}, the privacy guarantees of their algorithms either rely on the exact optimality or certifiability of the oracle. All of our algorithms satisfy differential privacy even if we implement the optimization oracles with a heuristic that satisfies neither condition.

In Section~\ref{sec:experiments}, we compare the performance of our algorithms against other practical algorithms for synthetic data generation. The benchmark we use is the High-Dimensional Matrix Mechanism~\citep{McKennaMHM18} which itself builds on the Matrix Mechanism~\citep{LiMHMR15} but is more efficient and scalable. Given a workload of queries $\cQ$, this algorithm uses optimization routines (in a significantly different way than ours) to select a different set of ``strategy queries'' which can be answered with Laplace noise. Answers to the original queries in $\cQ$ can then be reconstructed by combining the noisy answers to these strategy queries.

The study of oracle-efficiency also has a rich history in machine learning and optimization outside of differential privacy~\citep{BeygelzimerDHLZ05, BalcanBBCLS08, BeygelzimerDLM16, BenTalKHM15, HazanK16}. In particular, a number of works have sought to design oracle-efficient fair algorithms~\citep{AgarwalBDLW18, AlabiIK18, KearnsNRW18}.


\begin{section}{Preliminaries}\label{sec:prelim}

\begin{definition}[Differential Privacy (DP)] A randomized algorithm $\cM: \cX^* \to \cR$ satisfies $(\eps, \delta)$-differential privacy (DP) if for all databases $x, x'$ differing in at most one entry, and every measurable subset $S \subseteq \cR$, we have
	\[\Pr[\cM(x) \in S] \le e^{\eps} \Pr[\cM(x') \in S] + \delta.\]
If $\delta = 0$, we say that $\cM$ satisifies  $\eps$-diffrential privacy.
\end{definition}

To facilitate our privacy analysis, we will rely on the privacy notion
of \emph{zero-concentrated differential privacy} (zCDP), which
provides a simpler composition theorem.

\begin{definition}[Zero Concentrated Differential Privacy(zCDP)
\cite{BunS16}]
A mechanism $\cM:\cX \rightarrow R$ is
$(\rho)$-zero-concentrated differentially private if for
all neighboring datasets $x,x'\in \cX^*$, and all
$\alpha\in(0,\infty)$ the following holds
\begin{align*}
\bD_\alpha\left(M(x)||M(x')\right) \leq \rho \alpha
\end{align*}
where $\bD_\alpha$ is the $\alpha$-R\'enyi divergence
between the distribution $M(x)$ and the distribution $M(x')$.
\end{definition}

We can relate guarantees of DP and zCDP using the following lemmas.

\begin{lemma}[DP to zCDP \cite{BunS16}]\label{lem:dptocdp}
If $\cM$ satisfies $\eps$-differential privacy,
then $\cM$ satisfies $\left( \tfrac{1}{2}\eps^2 \right)$-zCDP.
\end{lemma}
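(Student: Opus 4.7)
The plan is to bound the R\'enyi divergence $\bD_\alpha(M(x) \| M(x'))$ for all neighboring datasets $x, x'$ and every $\alpha \in (0, \infty)$, using only the pointwise privacy-loss bound that pure $\eps$-DP supplies. Write $P = M(x)$ and $Q = M(x')$ and consider the privacy-loss random variable $Z(y) = \ln(P(y)/Q(y))$. Pure $\eps$-DP immediately gives $|Z(y)| \le \eps$ for every outcome $y$, while $\E{y \sim Q}{e^{Z(y)}} = 1$ holds by construction.

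I would first rewrite $(\alpha - 1)\bD_\alpha(P\|Q) = \ln \E{y \sim Q}{e^{\alpha Z}}$, which reduces the target $\bD_\alpha(P\|Q) \le \tfrac{1}{2}\eps^2 \alpha$ to a moment-generating-function estimate of the form $\E{y \sim Q}{e^{\alpha Z}} \le e^{\alpha(\alpha-1)\eps^2/2}$ (with the inequality reversing when $\alpha \in (0,1)$, since $\alpha - 1 < 0$ there). The key technical step is a Hoeffding-style MGF argument: set $A := e^Z \in [e^{-\eps}, e^\eps]$, which satisfies $\Ex{A} = 1$, and exploit convexity of $x \mapsto x^\alpha$ on this interval (concavity when $\alpha \in (0,1)$) to upper-bound $A^\alpha$ by the chord joining $(e^{-\eps}, e^{-\alpha\eps})$ and $(e^{\eps}, e^{\alpha\eps})$. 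Taking expectations and using $\Ex{A} = 1$ to pin down the chord weights yields a two-point estimate of the form $\lambda\, e^{\alpha\eps} + (1-\lambda)\, e^{-\alpha\eps}$, which I would then compare against the Gaussian-style target $e^{\alpha(\alpha-1)\eps^2/2}$.

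The main obstacle is that final elementary but delicate calculation: showing $\lambda\, e^{\alpha\eps} + (1-\lambda)\, e^{-\alpha\eps} \le e^{\alpha(\alpha-1)\eps^2/2}$ uniformly in $\alpha \ge 1$ and $\eps \ge 0$. I would approach this by Taylor-expanding both sides around $\eps = 0$ and matching coefficients up to second order, or equivalently by differentiating the log-ratio in $\eps$ and verifying monotonicity from the origin. Applying the symmetric argument to $\bD_\alpha(Q\|P)$ (pure DP is symmetric in its neighbors), together with the $\alpha \in (0,1)$ case handled analogously via concavity and the reversed inequality direction, yields $\bD_\alpha(M(x)\|M(x')) \le \tfrac{1}{2}\eps^2 \cdot \alpha$ for every $\alpha \in (0, \infty)$ and every pair of neighboring datasets, which is exactly the definition of $(\tfrac{1}{2}\eps^2)$-zCDP.
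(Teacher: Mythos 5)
The paper itself offers no proof of this lemma---it is imported verbatim from \citet{BunS16} (their Proposition~1.4)---and your sketch is essentially a reconstruction of that source's argument: the pointwise bound $|Z|\le\eps$, the normalization $\mathbb{E}_{y\sim Q}[e^{Z}]=1$, and the convexity/chord reduction to a two-point distribution with $\mathbb{E}[A]=1$ are exactly the right ingredients, and the symmetry remark disposes of $\bD_\alpha(Q\|P)$ correctly.

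The one place your plan is not yet a proof is the step you yourself flag: ``Taylor-expanding both sides around $\eps=0$ and matching coefficients up to second order'' establishes nothing globally, so you must actually carry out the monotonicity argument. Here is a clean way to close it. With $\lambda=\frac{1-e^{-\eps}}{e^{\eps}-e^{-\eps}}$ the two-point value simplifies to
\[
\lambda e^{\alpha\eps}+(1-\lambda)e^{-\alpha\eps}
=\frac{\sinh(\alpha\eps)-\sinh\bigl((\alpha-1)\eps\bigr)}{\sinh(\eps)}
=\frac{\cosh\bigl((\alpha-\tfrac12)\eps\bigr)}{\cosh(\eps/2)},
\]
and writing $c=\alpha-\tfrac12$, so that $\alpha(\alpha-1)=c^2-\tfrac14$, the comparison with $e^{\alpha(\alpha-1)\eps^2/2}$ is equivalent to comparing $f(c)$ with $f(\tfrac12)$ for $f(c)=\ln\cosh(c\eps)-\tfrac12c^2\eps^2$. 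Since $f'(c)=\eps\bigl(\tanh(c\eps)-c\eps\bigr)$ has the sign of $-c$, $f$ is even and decreasing in $|c|$; hence $f(c)\le f(\tfrac12)$ when $\alpha\ge1$ (giving the required upper bound on $\mathbb{E}[A^\alpha]$) and $f(c)\ge f(\tfrac12)$ when $\alpha\in(0,1)$ (giving the required \emph{lower} bound, which is what you need there because dividing by $\alpha-1<0$ flips the direction). This also makes precise your ``handled analogously'' remark for $\alpha\in(0,1)$: concavity of $x\mapsto x^\alpha$ reverses the chord inequality, but you still owe the separate verification that the two-point value dominates $e^{\alpha(\alpha-1)\eps^2/2}$, which the single function $f$ above delivers. (Note also that the standard definition of zCDP quantifies only over $\alpha\in(1,\infty)$; the range $\alpha\in(0,\infty)$ is an artifact of this paper's restatement of the definition.)
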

\begin{lemma}[zCDP to DP \cite{BunS16}]\label{lem:cdpTodp}
If $\cM$ provides $\rho$-zCDP, then
$\cM$ is $\left(\rho + 2 \sqrt{\rho\log(1/\delta)}, \delta \right)$-DP
for $\delta>0$.
\end{lemma}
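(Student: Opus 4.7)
The plan is to reduce $(\eps,\delta)$-DP to a tail bound on the privacy loss random variable, and then to obtain that tail bound from the Rényi divergence hypothesis via a Chernoff-style argument, optimized over the Rényi order.

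First, I would fix neighboring datasets $x, x'$ and define the privacy loss $Z = \log\!\left(\frac{d\cM(x)}{d\cM(x')}(y)\right)$ where $y \sim \cM(x)$. A standard decomposition shows that for any measurable $S \subseteq \cR$,
\[
\Pr[\cM(x) \in S] \;\le\; e^{\eps}\Pr[\cM(x')\in S] + \Pr[Z > \eps],
\]
so it suffices to prove $\Pr[Z > \eps] \le \delta$ for $\eps = \rho + 2\sqrt{\rho\log(1/\delta)}$. This reduction is the point at which the $\delta$ slack is absorbed, and it is the only non-computational step.

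Second, I would convert the $\rho$-zCDP hypothesis into a moment bound on $Z$. By the definition of Rényi divergence, for every $\alpha > 1$,
\[
\Ex{e^{(\alpha-1)Z}} \;=\; e^{(\alpha-1)\,\bD_\alpha(\cM(x)\|\cM(x'))} \;\le\; e^{(\alpha-1)\rho\alpha},
\]
using the zCDP bound $\bD_\alpha(\cM(x)\|\cM(x')) \le \rho\alpha$. Then Markov's inequality applied to $e^{(\alpha-1)Z}$ yields
\[
\Pr[Z > \eps] \;\le\; e^{-(\alpha-1)\eps}\,\Ex{e^{(\alpha-1)Z}} \;\le\; \exp\!\bigl((\alpha-1)(\rho\alpha - \eps)\bigr).
\]

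Third, I would optimize the free parameter $\alpha > 1$. Differentiating the exponent $(\alpha-1)(\rho\alpha-\eps)$ in $\alpha$ and solving gives the minimizer $\alpha^\star = \frac{\eps + \rho}{2\rho}$, which exceeds $1$ whenever $\eps > \rho$. Substituting back, the exponent becomes $-\frac{(\eps-\rho)^2}{4\rho}$, so
\[
\Pr[Z > \eps] \;\le\; \exp\!\left(-\frac{(\eps-\rho)^2}{4\rho}\right).
\]
Setting the right-hand side equal to $\delta$ and solving for $\eps$ yields precisely $\eps = \rho + 2\sqrt{\rho\log(1/\delta)}$, which completes the proof.

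There is no real obstacle here; the entire argument is a standard Chernoff-plus-optimize calculation as in Bun--Steinke. The only subtleties worth care are (a) a clean justification of the decomposition $\Pr[\cM(x)\in S] \le e^\eps\Pr[\cM(x')\in S] + \Pr[Z > \eps]$, including the handling of events on which $\cM(x')$ assigns zero mass (where one defines $Z = +\infty$), and (b) checking that the chosen $\alpha^\star$ is indeed in the feasible range $(1,\infty)$, which holds because $\delta < 1$ forces $\eps > \rho$ in the target bound.
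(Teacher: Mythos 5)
Your proof is correct and is exactly the argument in the cited source \cite{BunS16} (the paper itself states this lemma without proof): reduce $(\eps,\delta)$-DP to a tail bound on the privacy loss, bound the moment generating function via the R\'enyi divergence, apply Markov, and optimize the order $\alpha$ to get the exponent $-(\eps-\rho)^2/4\rho$. The computations (the minimizer $\alpha^\star=(\eps+\rho)/2\rho$, the feasibility check $\eps>\rho$, and the final solve for $\eps$) all check out.
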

\begin{lemma}[zCDP composition \cite{BunS16}]\label{lem:zcdpcomposition}
Let $\cM:\cX^*\rightarrow \cY$ and $M':\cX^*\rightarrow\cZ$
be randomized algorithm. Suppose that $\cM$ satisfies $\rho$-zCDP
and $\cM'$ satisfies $\rho'$-zCDP. Define $\cM'':\cX\rightarrow\cY\times\cZ$
by $\cM''(x)=\left(\cM(x), \cM'(x) \right)$. Then $\cM''$ satisfies
$(\rho+\rho')$-zCDP.
\end{lemma}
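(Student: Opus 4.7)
The plan is to verify the zCDP definition directly for $\cM''$: for any neighboring datasets $x, x' \in \cX^*$ and any $\alpha \in (0, \infty)$, I need to show $\bD_\alpha(\cM''(x) \| \cM''(x')) \leq (\rho + \rho')\alpha$. Since $\cM$ and $\cM'$ are run with independent randomness on the same input, the joint output distribution of $\cM''(x) = (\cM(x), \cM'(x))$ is the product of the marginals $\cM(x)$ and $\cM'(x)$, and similarly for $x'$.

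The key ingredient I would invoke (or prove as an auxiliary lemma) is the \emph{additivity of R\'enyi divergence under products}: for any distributions $P_1, Q_1$ on $\cY$ and $P_2, Q_2$ on $\cZ$,
\[
\bD_\alpha(P_1 \otimes P_2 \,\|\, Q_1 \otimes Q_2) \;=\; \bD_\alpha(P_1 \| Q_1) + \bD_\alpha(P_2 \| Q_2).
\]
This follows immediately from the definition $\bD_\alpha(P\|Q) = \frac{1}{\alpha-1}\log \int (dP/dQ)^\alpha \, dQ$ together with the factorization $(dP_1 dP_2/dQ_1 dQ_2)^\alpha = (dP_1/dQ_1)^\alpha (dP_2/dQ_2)^\alpha$ and Fubini's theorem, after which the logarithm turns the product into a sum.

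Applying additivity to the product structure of $\cM''$ gives
\[
\bD_\alpha(\cM''(x) \| \cM''(x')) \;=\; \bD_\alpha(\cM(x)\|\cM(x')) + \bD_\alpha(\cM'(x)\|\cM'(x')).
\]
Now I would plug in the zCDP hypotheses on $\cM$ and $\cM'$ to upper bound each term by $\rho\alpha$ and $\rho'\alpha$ respectively, yielding the desired bound $(\rho+\rho')\alpha$ and concluding that $\cM''$ satisfies $(\rho+\rho')$-zCDP.

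I do not anticipate a serious obstacle here: the whole argument reduces to the product-additivity of R\'enyi divergence, which is a clean calculation from the definition. The only subtlety worth flagging is the implicit assumption that $\cM$ and $\cM'$ use independent random coins; if one wished to handle the adaptive case where $\cM'$ may depend on $\cM(x)$, one would instead appeal to a chain rule for R\'enyi divergence, but the lemma as stated is the non-adaptive version and the product argument suffices.
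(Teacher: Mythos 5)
The paper states this lemma as a cited preliminary (from \cite{BunS16}) and gives no proof of its own, so there is nothing internal to compare against. Your argument --- reducing to additivity of the R\'enyi divergence under product measures, $\bD_\alpha(P_1\otimes P_2\,\|\,Q_1\otimes Q_2)=\bD_\alpha(P_1\|Q_1)+\bD_\alpha(P_2\|Q_2)$, and then invoking the two zCDP hypotheses --- is correct and is essentially the standard proof from the cited reference, specialized to the non-adaptive setting in which the lemma is stated. Your closing caveat is also right: the version the paper actually needs downstream (composing $T$ rounds where each mechanism's input depends on earlier outputs) is the adaptive one, which is proved by the conditioning/chain-rule form of the same calculation rather than by the product factorization.
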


We will use the exponential mechanism as a key component in our design
of private algorithms.


\begin{definition}[Exponential Mechanism \cite{mcsherry2007mechanism}]
\label{def:em}
Given some database $x$, arbitrary range $\cR$, and score function $S:\cX^{*}\times\cR\rightarrow \cR$, the exponential mechanism $\cM_E(x, S, \cR, \rho)$ selects and outputs an element $r \in \cR$ with probability proportional to
$$ \exp{\left( \frac{\rho S(x, r)}{2 \Delta_S} \right)},$$
where $\Delta_S$ is the sensitivity of $S$, defined as
$$\Delta_S = \max_{D, D': |D\triangle D'| = 1, r \in R} |S(D, r) - S(D', r)|. $$
\end{definition}

\begin{lemma}[\cite{mcsherry2007mechanism}]\label{lem:EMprivacy}
  The exponential mechanism $\mathcal{M}_E(x, S, \cR, \rho)$ is
  $\left(\frac{\rho^2}{2}\right)$-zCDP.
\end{lemma}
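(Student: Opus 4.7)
The plan is to prove this in two stages: first establish the classical pure differential privacy guarantee for the exponential mechanism, and then invoke Lemma~\ref{lem:dptocdp} to convert the $\eps$-DP bound into a zCDP bound.

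For the first stage, fix neighboring databases $D, D'$ and any output $r\in\cR$. Let $P_D$ and $P_{D'}$ denote the output distributions of $\cM_E(\cdot, S, \cR, \rho)$ on $D$ and $D'$, respectively, and let $Z(D), Z(D')$ denote the associated normalizing constants. I would write
\begin{align*}
\frac{P_D(r)}{P_{D'}(r)} = \frac{\exp\!\left(\tfrac{\rho S(D,r)}{2\Delta_S}\right) / Z(D)}{\exp\!\left(\tfrac{\rho S(D',r)}{2\Delta_S}\right) / Z(D')} = \exp\!\left(\tfrac{\rho (S(D,r) - S(D',r))}{2\Delta_S}\right) \cdot \frac{Z(D')}{Z(D)},
\end{align*}
and then bound each factor separately. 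The first factor is at most $e^{\rho/2}$ by the definition of $\Delta_S$. For the second factor, I would expand $Z(D') = \sum_{r'} \exp(\rho S(D',r')/(2\Delta_S))$ and apply the same sensitivity bound termwise to get $Z(D') \le e^{\rho/2} Z(D)$. Multiplying gives $P_D(r)/P_{D'}(r) \le e^{\rho}$, which establishes that $\cM_E$ satisfies $\rho$-differential privacy.

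For the second stage, I simply invoke Lemma~\ref{lem:dptocdp}, which states that any $\eps$-DP mechanism is $(\eps^2/2)$-zCDP. Applying this with $\eps = \rho$ yields the desired $(\rho^2/2)$-zCDP guarantee.

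I do not expect a significant obstacle here. The only subtle point in the pure DP derivation is handling the normalizing constant, which requires applying the sensitivity bound uniformly over the summands rather than just to the single term containing $r$; this is a standard manipulation. The conversion step is a direct citation of Lemma~\ref{lem:dptocdp}. Note that tighter direct analyses of the exponential mechanism under concentrated differential privacy exist in the literature, but the loose bound $(\rho^2/2)$-zCDP obtained via DP-to-zCDP conversion suffices for the applications in this paper.
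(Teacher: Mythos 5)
Your proof is correct and follows exactly the route the paper relies on: the paper states this lemma as a citation to McSherry--Talwar, and the intended derivation is precisely the classical $\rho$-DP guarantee of the exponential mechanism (with the two-factor bound on the probability ratio, handling the normalizer termwise) combined with Lemma~\ref{lem:dptocdp}. Your closing remark is also accurate --- a tighter direct analysis gives $\rho^2/8$-zCDP, but the paper only claims and uses the looser $\rho^2/2$ bound obtained from the generic conversion.
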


\begin{theorem}[Exponential Mechanism Utility \cite{mcsherry2007mechanism}] \label{thm:exp-error}.
Fixing a database $x$, let ${OPT}_S(x)$ denote the max score of function $S$. Then, with probability $1-\beta$ the error is bounded by:
$${OPT}_S(x) - S(x, \cM_E(x, u, \mathcal{R},\rho)) \le  \frac{2 {\Delta}_S}{\rho} \left( \ln{|\mathcal{R}|/\beta}\right) $$
\end{theorem}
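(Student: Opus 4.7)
}
The plan is to follow the standard argument for the utility of the exponential mechanism: upper bound the total probability mass assigned by $\cM_E$ to ``bad'' outputs whose score falls more than $t := \frac{2\Delta_S}{\rho}\ln(|\cR|/\beta)$ below $OPT_S(x)$, and show it is at most $\beta$. Let $r^* \in \argmax_{r \in \cR} S(x, r)$, so $S(x, r^*) = OPT_S(x)$. Call $r \in \cR$ \emph{bad} if $S(x, r) < OPT_S(x) - t$, and let $\cR_{\mathrm{bad}} \subseteq \cR$ be the set of bad elements.

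First, I would upper bound the probability that $\cM_E$ outputs any fixed bad $r$. By Definition~\ref{def:em}, this probability is
\[
\Pr[\cM_E(x, S, \cR, \rho) = r] \;=\; \frac{\exp\!\bigl(\rho S(x, r)/(2\Delta_S)\bigr)}{\sum_{r' \in \cR} \exp\!\bigl(\rho S(x, r')/(2\Delta_S)\bigr)}.
\]
The numerator is at most $\exp\!\bigl(\rho(OPT_S(x) - t)/(2\Delta_S)\bigr)$ since $r$ is bad, while the denominator is at least the single term corresponding to $r^*$, namely $\exp\!\bigl(\rho\,OPT_S(x)/(2\Delta_S)\bigr)$. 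Dividing, the ratio is at most $\exp\!\bigl(-\rho t / (2\Delta_S)\bigr)$.

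Next I would take a union bound over all bad elements. Since $|\cR_{\mathrm{bad}}| \le |\cR|$,
\[
\Pr\!\bigl[\cM_E(x, S, \cR, \rho) \in \cR_{\mathrm{bad}}\bigr] \;\le\; |\cR| \cdot \exp\!\Bigl(-\tfrac{\rho t}{2\Delta_S}\Bigr).
\]
Plugging in $t = \frac{2\Delta_S}{\rho}\ln(|\cR|/\beta)$, this bound equals exactly $\beta$. Therefore, with probability at least $1-\beta$ the selected $r$ is not bad, which gives $OPT_S(x) - S(x, \cM_E(x, S, \cR, \rho)) \le t$, as claimed.

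I do not anticipate a serious obstacle: the only subtlety is the two-sided estimate of the normalizing constant, where one must be careful to lower bound the denominator by a single favorable term (the $r^*$ term) rather than attempting a tight evaluation of the whole partition function. Note that the theorem is stated in terms of the exponential-mechanism parameter $\rho$ appearing in Definition~\ref{def:em}, not the zCDP parameter; no use of Lemma~\ref{lem:EMprivacy} is needed for the utility statement, only the definition of the sampling distribution.
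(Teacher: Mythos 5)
Your proof is correct and is precisely the standard McSherry--Talwar utility argument (bound the probability of any fixed bad outcome by lower-bounding the normalizer with the single term for $r^*$, then union bound over at most $|\cR|$ bad outcomes); the paper itself gives no proof of this theorem, importing it directly from \cite{mcsherry2007mechanism}, and your argument matches the one in that source. No gaps.
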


We are interested in privately releasing \emph{statistical linear queries}, formally defined as follows.

\begin{definition}[Statistical linear queries]
  Given as predicate a linear threshold function $\phi$, the linear
  query $q_\phi: \cX^n \rightarrow [0,1]$ is defined by
$$q_\phi(D) = \frac{\sum_{x\in D} \phi(x)}{|D|}$$
\end{definition}

The main query class we consider in our empirical evaluations is $3$-way marginals and $5$-way marginals. We give the definition here
%
\begin{definition}\label{def:marginals}
Let the data universe with $d$ categorical features be $\cX = (\cX_1\times\ldots\times\cX_d)$, where each $\cX_i$ is the discrete domain of the $i$th feature. We write $x_i\in\cX_i$ to mean the $i$th feature of record $x\in\cX$. A $3$-way marginal query is a linear query specified by 3 features $a\neq b\neq c\in [d]$, and a target $y\in(\cX_a\times\cX_b\times\cX_c)$, given by
\[
q_{\text{abc,y}}(x) =
\begin{cases}
1 & : x_a = y_1 \land  x_b = y_2 \land  x_c = y_3\\
0 &: \text{otherwise.}
\end{cases}
\]
Furthermore, its negation is given by
\[
\bar{q}_{\text{abc,y}}(x) =
\begin{cases}
0 & : x_a = y_1 \land  x_b = y_2 \land  x_c = y_3\\
1 &: \text{otherwise.}
\end{cases}
\]
Note that for each marginal $(a,b,c)$ there are $|\cX_a||\cX_b||\cX_c|$ queries.
\end{definition}

Finally, our algorithm will be using the following form of linear
optimization oracle. In our experiments, we implement this oracle via
an integer program solver.

\begin{definition}[Linear Optimization Oracle] \label{linopt}Given as
  input a set of $n$ statistical linear queries $\{q_i\}$ and a
  $d$-dimensional vector $\sigma$, a linear optimization oracle
  outputs
$$\hat{x} \in \argmin_{x\in\{0, 1\}^d} \left\{ \sum_{i=1}^n q_i(x) -  \inner{x}{\sigma} \right\}$$
\end{definition}

\end{section}


\begin{section}{Query Release Game}\label{sec:queryreleasegame}
  Given a class of queries $\cQ$ over a database $D$, we want to
  output a differentially private synthetic dataset $\widehat{D}$ such
  that for any query $q\in \cQ$ we have low error:
\begin{align*}
\text{error}(\widehat{D})=\max_{q\in\cQ}|q(D)-q(\widehat{D})| \leq \alpha.
\end{align*}
We revisit a zero-sum game formulation between a data-player and a
query player for this problem \cite{HsuRU13,gaboardi2014dual}. The
data player has action set equal to the data universe $\cX$ and the
query player has action set equal to the query class $\cQ$.  We make
the assumption that $\cQ$ is closed under negation. That is, for every
query $q \in \cQ$ there is a \textit{negated query} $\bar{q} \in \cQ$
where $\bar{q}(D) = 1 - q(D)$. If $\cQ$ is not closed under negation,
we can simply add negated queries to $\cQ$. Since $\cQ$ is closed
under negations, we can write the error as
\[
|q(D) - q(\widehat D)| = \max\{ q(D) - q(\widehat D), \neg q(D) - \neg q(\widehat D) \}
\]
This allows us to define a payoff function that captures the error of
$\widehat D$ without the absolute value. In particular, the payoff for
actions $x\in \cX$ and $q\in\cQ$ is given by:
\begin{equation}\label{eq:payoff}
A(x,q) := q(D) - q(x)
\end{equation}
The data player wants minimizes the payoff $A(x,q)$ while the query
player maximizes it. Intuitively, the data player would like to find a
distribution with low error, while the query player is trying to
identify the query with the worst error. Each player chooses a mixed
strategy, that is a distribution over their action set. Let
$\Delta(\cX)$ and $\Delta(\cQ)$ denote the sets of distributions ove
$\cX$ and $\cQ$.  For any $\widehat D \in \Delta(\cX)$ and
$\widehat Q\in \Delta(\cQ)$, the payoff is defined as
\[
  A(\widehat D, \cdot) = \E{x\sim \widehat D}{A(x, \cdot)}, \quad A(\cdot,
  \widehat Q) = \E{q\sim \widehat Q}{A(\cdot, q)}.
\]
A pair of mixed strategies
$(\widehat D, \widehat Q)\in \Delta(\cX) \times \Delta(\cQ)$ forms an
$\alpha$-approximate equilibrium of the game if
\begin{equation}
  \max_{q\in\cQ} A(\widehat D, q) - \alpha \leq A(\widehat D, \widehat Q) \leq \min_{x\in\cX} A(x, \widehat Q) + \alpha,
\end{equation}

The following result allows us to reduce the problem of query release
to the problem of computing an equilibrium in the game.

\begin{theorem}[\citet{gaboardi2014dual}]
  Let $(\widehat D, \widehat Q)$ be any $\alpha$-approximate
  equilibrium of the query release game, then the data player's
  strategy $\widehat D$ is $2\alpha$-accurate,
  $\text{error}(\widehat{D})=\max_{q\in\cQ}| q(D)-q(\widehat{D})| \leq 2\alpha$.
\end{theorem}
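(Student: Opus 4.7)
The plan is to chain together the two halves of the equilibrium inequality and bound the right-hand side by $0$ using the private dataset $D$ itself as a witness. Because $\cQ$ is closed under negation, the authors have already argued that
\[
\max_{q\in\cQ}|q(D)-q(\widehat D)| = \max_{q\in\cQ}\bigl( q(D) - q(\widehat D) \bigr) = \max_{q\in\cQ} A(\widehat D, q),
\]
so proving $\text{error}(\widehat D)\le 2\alpha$ reduces to showing $\max_{q\in\cQ} A(\widehat D, q) \le 2\alpha$.

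First I would apply the left half of the approximate-equilibrium condition, which yields $\max_{q\in\cQ} A(\widehat D, q) \le A(\widehat D, \widehat Q) + \alpha$. Then I would apply the right half to get $A(\widehat D, \widehat Q) \le \min_{x\in\cX} A(x, \widehat Q) + \alpha$. Chaining these two inequalities produces
\[
\max_{q\in\cQ} A(\widehat D, q) \;\le\; \min_{x\in\cX} A(x, \widehat Q) + 2\alpha.
\]

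The remaining step is to argue $\min_{x\in\cX} A(x, \widehat Q)\le 0$. For this I would use the private dataset $D$ to build a witness distribution: let $U_D$ be the uniform distribution over the records $D_1,\dots,D_n\in\cX$. Since the minimum over pure actions is at most any expectation over pure actions,
\[
\min_{x\in\cX} A(x,\widehat Q) \;\le\; \E{x\sim U_D}{A(x,\widehat Q)} \;=\; \E{q\sim\widehat Q}{\,q(D) - \E{x\sim U_D}{q(x)}\,} \;=\; \E{q\sim\widehat Q}{q(D) - q(D)} \;=\; 0,
\]
where I used linearity of expectation (swapping the order of the two expectations) together with the definition $q(D) = \frac{1}{n}\sum_{i=1}^n q(D_i) = \E{x\sim U_D}{q(x)}$.

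The only mild subtlety, and what I would flag as the key observation rather than an obstacle, is recognizing that the data player's action set $\cX$ contains the records of $D$, so $D$ itself furnishes a feasible mixed strategy that drives the payoff to exactly zero. Once this is spotted the proof is a three-line chain, and combining it with the reformulation of the absolute value via closure under negation immediately gives $\text{error}(\widehat D) \le 2\alpha$.
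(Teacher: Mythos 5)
Your proof is correct and is essentially the standard argument from the cited source \citep{gaboardi2014dual,HsuRU13}, which this paper quotes without reproving: chain the two halves of the equilibrium condition and observe that the empirical distribution of $D$ itself is a feasible data-player strategy achieving payoff exactly $0$, so $\min_{x\in\cX}A(x,\widehat Q)\le 0$. The reduction of the absolute value to $\max_{q\in\cQ}A(\widehat D,q)$ via closure under negation is also handled correctly, so there is nothing to add.
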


\subsection{No-Regret Dynamics}
To compute such an equilibrium privately, we will simulate no-regret
dynamics between the two players. Over rounds $t = 1 , \ldots , T$,
the two players will generate a sequence of plays
$(D^1, Q^1), \ldots , (D^T, Q^T)\in \Delta(\cX) \times
\Delta(\cQ)$. The regrets of the two players are defined as
\begin{align*}
  R_\data(T) = \frac{1}{T}\left( \sum_{t=1}^T A(D^t, Q^t) -  \min_{x\in \cX} \sum_{t=1}^T A(x, Q^t)\right) \\
  R_\query(T) = \frac{1}{T}\left( \max_{q\in \cQ} \sum_{t=1}^T A(D^t, q) - \sum_{t=1}^T A(D^t, Q^t)\right)
\end{align*}

\begin{theorem}[Follows from \cite{FS97}] \label{meta-theorem}
The average play
  $(\overline D, \overline Q)$ given by
  $\overline D = \frac{1}{T} \sum_{t=1}^T D^t$ and
  $\overline Q = \frac{1}{T} \sum_{t=1}^T Q^t$ from the no-regret
  dynamics above is an $\alpha$-approximate equilibrium with
\[
\alpha =R_\data(T) + R_\query(T).
\]
\end{theorem}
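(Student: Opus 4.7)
The plan is to unpack the regret definitions so that both $\min_{x\in\cX} A(x,\overline Q)$ and $\max_{q\in\cQ} A(\overline D, q)$ can be compared to the ``diagonal'' average $V := \frac{1}{T}\sum_{t=1}^T A(D^t, Q^t)$, and then sandwich $A(\overline D, \overline Q)$ between them.

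First I would note that the payoff $A$ is bilinear in the mixed strategies, so
\[
A(\overline D, q) = \frac{1}{T}\sum_{t=1}^T A(D^t, q) \quad \text{and} \quad A(x, \overline Q) = \frac{1}{T}\sum_{t=1}^T A(x, Q^t).
\]
Plugging these into the definitions of $R_\data(T)$ and $R_\query(T)$, the regret identities become
\[
\min_{x\in\cX} A(x, \overline Q) = V - R_\data(T), \qquad \max_{q\in\cQ} A(\overline D, q) = V + R_\query(T).
\]
Subtracting, the ``duality gap'' along the average play satisfies $\max_{q\in\cQ} A(\overline D, q) - \min_{x\in\cX} A(x, \overline Q) = R_\data(T) + R_\query(T) = \alpha$.

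Next I would use the trivial sandwich coming from the fact that expectations lie between their extremes. By bilinearity,
\[
A(\overline D, \overline Q) = \E{q \sim \overline Q}{A(\overline D, q)} \leq \max_{q \in \cQ} A(\overline D, q),
\]
\[
A(\overline D, \overline Q) = \E{x \sim \overline D}{A(x, \overline Q)} \geq \min_{x \in \cX} A(x, \overline Q).
\]
Combining with the gap bound above yields, on one side,
\[
\max_{q\in \cQ} A(\overline D, q) \leq \min_{x\in\cX} A(x, \overline Q) + \alpha \leq A(\overline D, \overline Q) + \alpha,
\]
which is the first inequality of the $\alpha$-approximate equilibrium condition. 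Symmetrically,
\[
\min_{x\in\cX} A(x, \overline Q) \geq \max_{q\in\cQ} A(\overline D, q) - \alpha \geq A(\overline D, \overline Q) - \alpha,
\]
which is the second inequality.

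There is no real obstacle here; the only mild subtlety worth flagging is that $A(\overline D, \overline Q) = \frac{1}{T^2}\sum_{s,t} A(D^s, Q^t)$ is \emph{not} the same as the diagonal sum $V$ that appears in the regret definitions, so one must not conflate the two. The argument above avoids this pitfall by only using the one-sided inequalities $\min_x A(x, \overline Q) \leq A(\overline D, \overline Q) \leq \max_q A(\overline D, q)$ rather than trying to equate $A(\overline D, \overline Q)$ with $V$ directly.
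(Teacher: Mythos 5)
Your proof is correct, and it is precisely the standard Freund--Schapire argument that the paper invokes by citation without reproducing: rewrite the two regrets as $\min_x A(x,\overline Q) = V - R_\data(T)$ and $\max_q A(\overline D, q) = V + R_\query(T)$ via bilinearity, then sandwich $A(\overline D,\overline Q)$ between $\min_x A(x,\overline Q)$ and $\max_q A(\overline D,q)$. Your remark distinguishing the diagonal average $V$ from $A(\overline D,\overline Q)$ is exactly the right subtlety to flag, and your argument handles it correctly.
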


We will now provide two frameworks to obtain regret bounds for the two
players.

\end{section}


\section{Primal Oracle-Efficient
  Framework}\label{sec:noregretdynamics}

In the primal framework, we will have the data player run a online
learning algorithm to update the distributions $D^1, \ldots , D^T$
over rounds and have the query player play an approximate best
response $Q^t$ against $D^t$ in each round. The algorithm \mwem falls
under this framework, but the no-regret algorithm (MW) runs in
exponential time even in the best case since it maintains a
distribution over the entire domain $\cX$. We replace the MW method
with two variants of the follow-the-perturbed-leader (FTPL) algorithm
\cite{KalaiV05}---\algoA \cite{online2019suggala} and \algoB
\cite{syrgkanis2016efficient}. Both of these algorithms can generate a
sample from their FTPL distributions by relying an oracle to solve a
perturbed optimization problem. (In our experiments, we instantiate this oracle with an integer
program solver.) For both algorithms, the query player selects a
query $q_t\in \cQ$ (that is $Q_t$ is point mass distribution on $q_t$) using the exponential mechanism, denoted by $\cM_E$. We present this primal framework in
Algorithm~\ref{alg:framework}.

\begin{algorithm}
\begin{algorithmic}
  \caption{Primal Framework of No-Regret Dynamics}\label{alg:framework}
  \REQUIRE FTPL algorithm $\cA$
\INPUT A dataset $D\in \cX^n$, query class $\cQ$, number of rounds $T$, target privacy $\rho$.
\STATE Initialize $\rho_0 = \rho/T$. Get initial sample $q_0\in\cQ$ uniformly at random.
\FOR{$t = 1 $ {\bfseries to} $T$}
	\STATE \textbf{Data Player} Generate $\widehat{D^t}$ with online learner $\cA$ with queries $q_0, \ldots ,q_{t-1}$.
  \STATE \textbf{Query player:} Define score function $S_t$. For each query $q \in \cQ$, set
	$S_t(D, q) =q(D) - q(\widehat{D^t})$.

	\STATE
  Sample $q_t \sim \cM_E(D, S_t, \cQ, \sqrt{2\rho_0})$ \COMMENT{{such that EM  satisfies $\rho_0$-zCDP}}
	%
\ENDFOR
\OUTPUT $\frac{1}{T} \sum_{t=1}^T \widehat{D^t}$
\end{algorithmic}
\end{algorithm}

Now we instantiate the primal framework above with two no-regret
learners, which yield two algorithms \fem ((Non-Convex)-FTPL with
Exponential Mechanism) and \sfem (Separator-FTPL with Exponential
Mechanism). First, the \fem algorithm at each round $t$ computes a
distribution $D_t$ by solving a perturbed linear optimization problem
polynomially many times. The optimization objective is given by the payoff
against the previous queries and a linear perturbation
\[
\argmin_{x\in \cX} \sum_{i=0}^{t-1} A(x, q_i) + \langle x, \sigma\rangle
\]
where $\sigma$ is a random vector drawn from the exponential
distribution. 
Observe that the first term $q_i(D)$ in $A(x, q_i) = q_i(D) - q_i(x)$ does not depend on $x$. Thus, we can further simplify the objective as
\[
\argmax_{x\in\cX}\left \{ \sum_{i=0}^{t-1} q_i(x) - \langle x, \sigma\rangle \right \}
\]
To solve this problem above, we will use an linear optimization
oracle (\cref{linopt}), which we will implement using an integer program solver.

\begin{algorithm}[h]
\begin{algorithmic}
\caption{Data player update in \fem}
\label{alg:fem}
\INPUT Queries $q_0, \ldots, q_{t-1}\in \cQ$, exponential distribution scale $\eta$, number of samples $s$.
\FOR{$j \gets 1$ \textbf{to} $s$}
\STATE Let $\sigma_{j} \in \mathbb{R}^d$ be a random vector such that each coordinate of $\sigma_{j}$ is drawn from the exponential distribution $\text{Exp}(\eta)$. Obtain a FTPL sample $x_j^t$ by solving
 $$x_j^t \in \argmax_{x\in\cX} \left \{ \sum_{i=0}^{t-1} q_i(x) - \langle x, \sigma_{j}\rangle \right \}$$
\ENDFOR
\OUTPUT $\widehat{D_t}$ as the uniform distribution over $\{ x_1^t,  \ldots , x_s^t \}$\end{algorithmic}
\end{algorithm}

The second algorithm is less general, but as we will show it achieves
a better error rate for important classes of queries.  Algorithm \sfem relies on
the assumption that the query class $\cQ$ has a small separator set
$\sepQ$.

\begin{definition}[Separator Set]
  A queries class $\cQ$ has a small separator set $\sepQ$
   if for any two
  distinct records $x,x'\in \cX$, there exist a query $q:\cX\rightarrow \{0,1\}$ in $\sepQ$ such that
  $q(x) \neq q(x')$.
\end{definition}

Many classes of statistical queries defined over the boolean hypercube
have separator sets of size proportional to their VC-dimension or the
dimension of the input data. For example, boolean conjunctions,
disjunctions, halfspaces defined over the $\{0, 1\}^d$, and parity
functions all have separator sets of size $d$.

Algorithm \sfem then perturbs the data player's optimization problem by inserting
``fake'' queries from the separator set:
\[
  \argmax_{x\in\cX} \left \{ \sum_{i=1}^{t-1} q_i(x) + \sum_{\tilde{q}_j\in \sepQ} \sigma_{j}
    \tilde{q}_j(x) \right \},
\]
where each $\sigma_{j}\in\mathbb{R}$ is sampled from the Laplace distribution. This problem
can be viewed as a simple special case of the linear optimization
problem in \cref{linopt} with no linear perturbation term.

\begin{algorithm}[h]
\begin{algorithmic}
\caption{Data player update in \sfem}
\label{alg:sepfem}
\INPUT Queries $q_0, \ldots, q_{t-1}\in \cQ$, Laplace noise scale $\eta$, number of samples $s$.
\STATE Let $\sepQ = \{\tilde q_1 , \ldots , \tilde q_M\}$ be the serparator set for  $\cQ$.
\FOR{$j =1$ \textbf{to} $s$}
\STATE Let $\sigma_j \in \mathbb{R}^M$ be a fresh random vector such that each coordinate of $\sigma_j$ is drawn from the Laplace distribution $\text{Lap}(\eta)$. Obtain a FTPL sample $x_j^t$ by solving
	\[
        x_j^t \in  \argmax_{x\in\cX} \left \{ \sum_{i=0}^{t-1} q_i(x) + \sum_{i=1}^M
            \sigma_{j, i} \tilde{q}_i(x) \right \}
        \]
\ENDFOR
\OUTPUT $\widehat{D_t}$ be a uniform distribution over $\{ x_1^t, \ldots , x_s^t \}$
\end{algorithmic}
\end{algorithm}

To derive the privacy guarantee of these two algorithms, we observe that
the data player's update does not directly use the private dataset
$D$. Thus, the privacy guarantee directly follows from the composition
of $T$ exponential mechanisms.

%
%
\begin{theorem}[Privacy]\label{thm:frameworkprivacy}
  Algorithm \ref{alg:framework}  satisfies $\rho$-zCDP for any  instantiated with any no-regret algorithm then it
\end{theorem}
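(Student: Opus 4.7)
The plan is to apply zCDP composition across the $T$ rounds, using the fact that the private dataset $D$ is only ever touched through the exponential mechanism call made by the query player.

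First, I would verify the sensitivity of the score function. Since $S_t(D, q) = q(D) - q(\widehat{D^t})$ and the data player's distribution $\widehat{D^t}$ is a function of $(q_0, \ldots, q_{t-1})$ only (and any independent randomness in $\cA$), $\widehat{D^t}$ does not depend on $D$ given the previously released queries. So for neighboring $D, D'$ the sensitivity reduces to that of $q(D)$, which is at most $1/n$ since $q \in [0,1]$ and $q$ is an average over the $n$ records. With this bound on $\Delta_{S_t}$, I can invoke \Cref{lem:EMprivacy} with parameter $\sqrt{2\rho_0}$ to conclude that each draw $q_t \sim \cM_E(D, S_t, \cQ, \sqrt{2\rho_0})$ satisfies $\rho_0$-zCDP (regardless of the history, conditioned on it).

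Second, I would formalize that the full transcript of the algorithm is the adaptive composition of the $T$ exponential-mechanism calls. In round $t$, the query $q_t$ is generated by an $\rho_0$-zCDP mechanism that takes $D$ as input (with $(q_0,\ldots,q_{t-1})$ and $\widehat{D^t}$ fixed as auxiliary information that is itself a post-processing of the earlier private outputs plus independent randomness, hence does not add privacy cost). Iterating \Cref{lem:zcdpcomposition} gives that the sequence $(q_1, \ldots, q_T)$ is $T \rho_0$-zCDP. Since $\rho_0 = \rho/T$, this totals to $\rho$-zCDP. The final output $\frac{1}{T}\sum_t \widehat{D^t}$ is a post-processing of this sequence (together with independent randomness used inside $\cA$), so it inherits $\rho$-zCDP.

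The main \textquotedblleft obstacle\textquotedblright{} is really just a bookkeeping point rather than a mathematical hurdle: one must be careful to argue that $\widehat{D^t}$ can be treated as part of the adversary's side information at round $t$, so that the exponential mechanism's per-round zCDP guarantee applies conditionally on the entire past. This follows from the structural fact that $\cA$ never reads $D$ directly; it only consumes the previously released queries. Once that observation is in place, the theorem is an immediate application of \Cref{lem:EMprivacy} and \Cref{lem:zcdpcomposition}, and holds uniformly over the choice of no-regret learner $\cA$ used to instantiate the data player.
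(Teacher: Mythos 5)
Your proposal is correct and follows essentially the same route as the paper: per-round $\rho_0$-zCDP of the exponential mechanism via Lemma~\ref{lem:EMprivacy}, followed by composition over $T = \rho/\rho_0$ rounds via Lemma~\ref{lem:zcdpcomposition}. The additional bookkeeping you supply---that $\Delta_{S_t} \le 1/n$ and that $\widehat{D^t}$ is a post-processing of previously released queries and independent randomness, so the data player's update costs nothing---is exactly the observation the paper states informally just before the theorem.
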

\begin{proof}
The \cref{alg:framework} executes $T=\rho/\rho_0$
runs of of the exponential mechanism $\cM(x, S, \cR, \sqrt{2\rho_0})$
with parameter $\sqrt{2\rho_0}$. Then by Lemma \ref{lem:EMprivacy},
we have that $\cM(x, S, \cR,\sqrt{2\rho_0})$
satisfies $\rho_0$-zCDP.
Finally Lemma \ref{lem:zcdpcomposition} states that the composition
of $T = \tfrac{\rho}{\rho_0}$ $\rho_0$-mechanims satisfies $\rho$-zCDP.
\end{proof}



To derive the accuracy guarantee of the two algorithms, we first bound
the regret of the two players. Note that the regret guarantee of the
data player follow from the regret bounds on the two FTPL algorithms
\cite{online2019suggala} and \cite{syrgkanis2016efficient}. The regret
guarantee of the query player directly follows from the utility
guarantee of the exponential mechanism \cite{mcsherry2007mechanism}.
We defer the details to the appendix.

\begin{corollary}[\fem Accuracy]\label{thm:femaccuracy}
Let $d =\log(\cX)$. For any dataset $D\in \cX^n$, query class $\cQ$ and privacy parameter $\rho>0$, there exists $T, \eta$ and $s$ so that with probability at least $1-\beta$, the algorithm \fem finds a synthetic database $\widehat{D}$ that answers all queries in $\cQ$ with error
\begin{align*}
\max_{q\in\cQ}|q(D)-q(\widehat{D})|\leq
\widetilde{O} \left(
\frac{d^{3/4}\sqrt{\log\left(\tfrac{|\cQ|}{\beta}\right)}}{\rho^{1/4}n^{1/2}}
\right)
\end{align*}
By Lemma \ref{lem:cdpTodp}, \cref{alg:fem} satisfies
$(\eps, \delta)$-differential privacy with
$\eps = \rho + 2\sqrt{\rho\log(1/\delta)}$.
If $\eps<1$ then  \fem has error

\begin{align*}
\max_{q\in\cQ}|q(D)-q(\widehat{D})|\leq
  \widetilde{O} \left(
    \frac{d^{3/4} \log^{1/2} |\cQ| \cdot
      \sqrt{
        \log(\tfrac{1}{\delta}) \log(\tfrac{1}{\beta})}
      }{\eps^{1/2}n^{1/2}} \right)
\end{align*}
\end{corollary}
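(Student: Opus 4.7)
The plan is to decompose the error via the no-regret dynamics meta-theorem (Theorem~\ref{meta-theorem}), bound the two players' regrets separately, optimize the free parameters $T$ and $\eta$, and then convert the resulting $\rho$-zCDP guarantee (Theorem~\ref{thm:frameworkprivacy}) into $(\eps,\delta)$-DP using Lemma~\ref{lem:cdpTodp}. By Theorem~\ref{meta-theorem} the output $\widehat{D} = \tfrac{1}{T}\sum_{t} \widehat{D_t}$ forms an $\alpha$-approximate equilibrium with $\alpha = R_\data(T) + R_\query(T)$, and the query-release/equilibrium reduction recalled at the top of Section~\ref{sec:queryreleasegame} then yields $\max_{q\in\cQ}|q(D)-q(\widehat{D})|\le 2\alpha$.

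For the query player, each round invokes the exponential mechanism with noise parameter $\sqrt{2\rho_0} = \sqrt{2\rho/T}$, range $\cQ$, and score function $S_t(D,q)=q(D)-q(\widehat{D_t})$ of sensitivity $1/n$ (the term $q(\widehat{D_t})$ does not depend on $D$). Theorem~\ref{thm:exp-error} combined with a union bound over the $T$ rounds then gives, with probability at least $1-\beta/2$, per-round query-player regret at most
\[
O\!\left(\frac{\sqrt{T}\,\log(|\cQ|T/\beta)}{n\sqrt{\rho}}\right).
\]

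For the data player, I would invoke the published regret bound for \algoA from \cite{online2019suggala}, which takes the form $\mathbb{E}[\regretA] \le 125\eta d^2 T + (1+\log d)d/\eta + 21\alpha' T/20$, where $\alpha'$ absorbs the empirical-distribution error of using only $s$ FTPL samples in place of the true FTPL distribution; a Hoeffding plus union-bound argument shows that $s$ polynomial in $T$, $d$, and $\log(|\cQ|/\beta)$ suffices to drive $\alpha'$ below $1/T$, and a Markov-style step converts the expected-regret statement into a high-probability one. Dividing both players' bounds by $T$ and adding them, I would choose $\eta = \Theta(\sqrt{\log d/(dT)})$ (making the data-player term $\widetilde O(d^{3/2}/\sqrt{T})$) and then $T = \Theta(d^{3/2} n\sqrt{\rho}/\log(|\cQ|/\beta))$ to balance the two terms, which yields
\[
\alpha = \widetilde{O}\!\left(\frac{d^{3/4}\sqrt{\log(|\cQ|/\beta)}}{\rho^{1/4}\, n^{1/2}}\right),
\]
matching the first part of the corollary. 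The second part follows by taking $\rho = \Theta(\eps^2/\log(1/\delta))$, which by Lemma~\ref{lem:cdpTodp} gives $(\eps,\delta)$-DP when $\eps<1$, and substituting $\rho^{-1/4} = \widetilde O(\log^{1/4}(1/\delta)/\eps^{1/2})$ into the zCDP bound above.

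The main obstacle will be the careful accounting of the sampling error $\alpha'$: one needs a uniform guarantee, over the (adaptively chosen) queries $q_0,\ldots,q_{T-1}$ and over all $T$ rounds, that the FTPL empirical average $\widehat{D_t}$ evaluates each query within $\alpha'$ of the true FTPL distribution, together with a Markov/median-of-means step converting \algoA's expected-regret guarantee into a high-probability one. Budgeting the three failure events (Hoeffding over $Ts$ samples, Markov on FTPL regret, union over $T$ EM calls) to total below $\beta$ while keeping $s$ and $T$ polynomial is the only nontrivial bookkeeping; once this is in place, the parameter optimization above goes through as stated.
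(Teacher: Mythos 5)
Your proposal follows essentially the same route as the paper: decompose via Theorem~\ref{meta-theorem}, bound the query player's regret by the exponential-mechanism utility theorem with sensitivity $1/n$ and a union bound over $T$ rounds, bound the data player's regret by the Non-Convex-FTPL guarantee of \cite{online2019suggala} with $\eta \approx \sqrt{1/(dT)}$, balance the two terms with $T \approx d^{3/2} n\sqrt{\rho}/\log|\cQ|$, and finish with $\rho = \Theta(\eps^2/\log(1/\delta))$ via Lemma~\ref{lem:cdpTodp}. The parameter choices and the resulting exponents all match the paper's proof (Lemmas~\ref{lemma:query-regret} and~\ref{lemma:data-regret}).

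The one place you diverge is the conversion of the FTPL \emph{expected}-regret bound into a high-probability statement. The paper does not use Markov or median-of-means at all: it observes that since the payoff is linear in the data player's mixed strategy, the expected-regret bound applies directly to the sequence of true FTPL distributions $D^t$ (the conditional law of $x_t$ given the history), so the only stochastic quantity requiring concentration is the deviation of the $s$-sample empirical distribution $\widehat{D^t}$ from $D^t$, which is handled by the Chernoff bound of Lemma~\ref{lemma:datasampling} with a union bound over all $T$ rounds and all $|\cQ|$ queries (contributing the additive $\alpha/4$ term). Your Markov-style step, taken literally at failure probability $\beta$, would inflate the data player's regret by a factor $1/\beta$ and thus degrade the claimed $\sqrt{\log(1/\beta)}$ dependence to $\mathrm{poly}(1/\beta)$; a median-of-means repair is awkward here because the regret is a property of the entire $T$-round interaction, not of independent repetitions you can cheaply rerun and select among. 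Replacing that step with the ``play the true FTPL distribution, then Chernoff-bound the empirical approximation'' argument (which you already have the ingredients for, since you invoke Hoeffding plus a union bound for the $s$-sample approximation) closes the gap and recovers the stated bound.
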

%

\begin{corollary}[\sfem Accuracy]\label{thm:sepfemaccuracy}
Let $d =\log(\cX)$. For any dataset $D\in\cX^n$ and query class $\cQ$ with a separator set $\sepQ$ and privacy parameter $\rho>0$, there exist $T, \eta$ and $s$ so that with probability at least $1-\beta$, algorithm \sfem finds a synthetic  database $\widehat{D}$ that answers all queries in $\cQ$ with error
\begin{align*}
\max_{q\in\cQ}|q(D)-q(\widehat{D})|\leq
 \widetilde{O }\left(
\frac{|\sepQ|^{3/8}d^{1/4} \sqrt{\log\left(\tfrac{|\cQ|}{\beta}\right)}}{\rho^{1/4}n^{1/2}}
\right)
\end{align*}
By Lemma \ref{lem:cdpTodp}, \cref{alg:framework} satisfies
$(\eps, \delta)$-differential privacy with
$\eps = \rho + 2\sqrt{\rho\ln(1/\delta)}$.
If $\eps<1$ then  \sfem has error
\begin{align*}
\max_{q\in\cQ}|q(D)-q(\widehat{D})|\leq
\widetilde{O } \left(
\frac{|\sepQ|^{3/8}d^{1/4}\sqrt{\log\left(\tfrac{|\cQ|}{\beta}\right)
\log\left( \tfrac{1}{\delta}\right)}
 }{\eps^{1/2}n^{1/2}}
\right)
\end{align*}
\end{corollary}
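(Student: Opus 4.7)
The plan is to reduce the accuracy claim to three ingredients via Theorem~\ref{meta-theorem}: the regret of the query player, the regret of the data player, and the empirical approximation error arising because $\widehat{D_t}$ is the average of only $s$ samples from the true FTPL distribution. Writing $D_t^\star$ for the FTPL distribution realized by the Separator-FTPL procedure and $\widehat{D_t}$ for its $s$-sample empirical version, a triangle-inequality decomposition gives
\begin{align*}
\max_{q\in\cQ}|q(D)-q(\widehat D)| \;\le\; R_\data(T)/T \;+\; R_\query(T)/T \;+\; \max_{t,q}|q(D_t^\star)-q(\widehat{D_t})|.
\end{align*}
The third term is handled by a Chernoff bound with a union bound over $T|\cQ|$ pairs, giving deviation $\tilde O(\sqrt{\log(T|\cQ|/\beta)/s})$; choosing $s$ polynomially large in $T$ and $\log(|\cQ|/\beta)$ makes this contribution negligible compared to the two regret terms.

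For the query player's regret, each round $t$ draws $q_t$ via the exponential mechanism with score $S_t(D,q)=q(D)-q(\widehat{D_t})$ of sensitivity $1/n$ and privacy parameter $\sqrt{2\rho_0}$ where $\rho_0=\rho/T$. Theorem~\ref{thm:exp-error} together with a union bound over $T$ rounds yields
\begin{align*}
R_\query(T)/T \;=\; O\!\left(\frac{\sqrt{T}\,\log(T|\cQ|/\beta)}{n\sqrt{\rho}}\right).
\end{align*}
For the data player's regret, I would invoke the Separator-FTPL regret guarantee of \citet{syrgkanis2016efficient}: because $\cQ$ admits a separator set $\sepQ$, the perturbation-by-separator construction used in Algorithm~\ref{alg:sepfem} satisfies a regret bound of the form $O\!\left(\eta T + |\sepQ|^{3/2} d/\eta\right)$ on linear payoffs over $\cX\subseteq\{0,1\}^d$. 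Optimizing $\eta$ gives total regret $O(\sqrt{|\sepQ|^{3/2} d\,T})$, hence $R_\data(T)/T = O\!\left(\sqrt{|\sepQ|^{3/2} d/T}\right)$.

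Now I balance the two average-regret expressions as in the \fem analysis (Corollary~\ref{thm:femaccuracy}). Setting $\sqrt{T}\log(|\cQ|/\beta)/(n\sqrt{\rho}) \asymp \sqrt{|\sepQ|^{3/2} d/T}$ yields the optimal choice $T \asymp n\sqrt{\rho}\,\sqrt{|\sepQ|^{3/2} d}/\log(|\cQ|/\beta)$, and plugging back in gives exactly
\begin{align*}
\max_{q\in\cQ}|q(D)-q(\widehat D)| \;\le\; \widetilde O\!\left(\frac{|\sepQ|^{3/8}\,d^{1/4}\,\sqrt{\log(|\cQ|/\beta)}}{\rho^{1/4}\,n^{1/2}}\right),
\end{align*}
which is the zCDP bound. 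The $(\eps,\delta)$-DP statement is obtained by invoking Lemma~\ref{lem:cdpTodp}: the privacy analysis of Theorem~\ref{thm:frameworkprivacy} already certifies $\rho$-zCDP, and substituting $\rho = \Theta(\eps^2/\log(1/\delta))$ (valid in the regime $\eps<1$) into the error bound produces the stated $\eps^{-1/2}\sqrt{\log(1/\delta)}$ dependence.

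The main obstacle will be pinning down the precise $|\sepQ|^{3/2}d$ coefficient in the Separator-FTPL regret: the particular exponents $|\sepQ|^{3/8}$ and $d^{1/4}$ in the final accuracy bound are driven entirely by this coefficient, so I need to apply the Syrgkanis-Krishnamurthy-Schapire analysis carefully to the payoff $A(x,q_i)=q_i(D)-q_i(x)$ with per-round range bounded by $1$ and verify that the separator-based perturbation $\sum_j \sigma_j \tilde q_j(x)$ yields the claimed stability in expectation. Once that regret bound is in hand the remaining work is bookkeeping: optimizing $T$, setting $s$ large enough to absorb the sampling error into the $\widetilde O(\cdot)$ notation, and the zCDP-to-DP conversion, all of which are routine.
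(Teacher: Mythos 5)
Your proposal follows essentially the same route as the paper: decompose via Theorem~\ref{meta-theorem} into the query player's exponential-mechanism regret, the data player's Separator-FTPL regret, and a Chernoff-bounded sampling error, then balance the two regret terms in $T$ and convert zCDP to DP via Lemma~\ref{lem:cdpTodp}. The only cosmetic difference is your intermediate parametrization of the Separator-FTPL bound ($\eta T + |\sepQ|^{3/2}d/\eta$ versus the paper's $4\eta|\sepQ|T + \tfrac{10}{\eta}|\sepQ|^{1/2}d$, where $d=\log N$ enters as the log of the policy-space size), but both optimize over $\eta$ to the same $\sqrt{|\sepQ|^{3/2}dT}$ total regret and hence the same final exponents.
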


Note that if the query class $\cQ$ has a separator set of size $O(d)$,
which is the case for boolean conjunctions, disjunctions, halfspaces
defined over the $\{0, 1\}^d$, and parity functions, then the bound
above becomes
\begin{align*}
\max_{q\in\cQ}|q(D)-q(\widehat{D})|\leq
\widetilde{O} \left( \frac{d^{5/8} \log^{1/2} |\cQ| \cdot
      \log^{1/2}(1/\delta) \log^{1/2}(1/\beta)} {\eps^{1/2}n^{1/2}} \right)
\end{align*}


\begin{remark}
Non-convex FEM and Separator FEM exhibit a better tradeoff between $\alpha$ and $n$ than DualQuery, but a slightly worse dependence on $d$ compared to DualQuery and MWEM.

\end{remark}

\begin{section}{\dqrs: \dq with Rejection Sampling}

  In this section, we present an algorithm \dqrs that builds on the
  \DQ\ algorithm \cite{gaboardi2014dual} and achieves better provable
  sample complexity. In \DQ, we employ the dual framework of the query
  release game -- the query player maintains a distribution over
  queries using the Multiplicative Weights (MW) no-regret learning
  algorithm and the data player best responds. However, the query
  player cannot directly use the distribution $\cQ^t$ proposed by MW
  during round $t$ because it depends on the private data. Instead,
  for each round $t$, it takes $s$ samples from $\cQ^t$ to form an
  estimate distribution $\widehat{\cQ^t}$. The data player then
  best-responds against $\widehat{\cQ^t}$. Sampling from the MW
  distribution $\cQ^t$ can be interpreted as a sample from the
  exponential mechanism. The sampling step incurs a significant
  privacy cost.

Our algorithm \dqrs~improves the sampling step of \DQ~in order to reduce the privacy cost (and the runtime). The basic idea of our algorithm \dqrs~is to apply the rejection sampling technique to ``recycle'' samples from prior rounds. Namely, we generate some samples from $\cQ^t$ using the samples obtained from the distribution in the previous round, i.e., $\cQ^{t-1}$. This is possible because $\cQ^t$ is close to $\cQ^{t-1}$. We show that by taking fewer samples from $\cQ^t$ for each round $t$, we consume less of the privacy budget. The result is that the algorithm operates for more iterations and obtains lower regret (i.e., better accuracy).

\begin{theorem}
  \label{thm:rsprivacy}
  \dq with rejection sampling (Algorithm \ref{alg:dqrs}) takes in a private dataset $D \in \cX^n$ and makes $T=O\left(\frac{\log|\cQ|}{\alpha^2}\right)$ queries to an optimization oracle and outputs a dataset $\tilde{D}=(x^1, \cdots, x^T)\in \cX^T$ such that, with probability at least $1-\beta$, for all $q \in \cQ$ we have $|q(\tilde{D})-q(D)|\le\alpha$. 
  The algorithm is $(\varepsilon,\delta)$-differentially private and attains accuracy $$\alpha = O\left(\frac{\log(|\cX|T/\beta) \cdot \log^{3}(|\cQ|) \cdot \log(1/\delta)}{n^{2} \varepsilon^{2}}\right)^{1/5}.$$
\end{theorem}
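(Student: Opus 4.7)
I will proceed in three stages: first reduce the utility statement to controlling game regrets, then analyze the rejection-sampling subroutine's privacy cost, and finally tune the free parameters $(T,s,\eta)$.

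In the first stage, the query player runs multiplicative weights over $\cQ$ with learning rate $\eta$, yielding the standard average regret $R_\query(T) = O(\eta + \log|\cQ|/(\eta T))$ since per-round losses $\ell^t(q)=q(x^t)-q(D)$ are bounded in $[-1,1]$. The data player best-responds via the optimization oracle, but it responds to an empirical distribution $\widehat{\cQ^t}$ formed from $s$ samples of the true MW distribution $\cQ^t$; a Chernoff--Hoeffding bound plus a union bound over $\cX \times [T]$ shows $\sup_{x \in \cX} |A(x, \widehat{\cQ^t}) - A(x, \cQ^t)| = O(\sqrt{\log(|\cX|T/\beta)/s})$ with probability at least $1-\beta$, which folds into an effective data-player regret of the same order. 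Applying Theorem \ref{meta-theorem} then bounds $\alpha$ by $R_\query(T)$ plus this concentration term.

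The new ingredient, and the second stage, is the privacy analysis of the rejection-sampling subroutine. Since the one-step MW update has the form $\cQ^t(q) \propto \cQ^{t-1}(q) \exp(-\eta \ell^t(q))$ with losses bounded in $[-1,1]$, the density ratio $\cQ^t(q)/\cQ^{t-1}(q)$ lies in $[e^{-2\eta}, e^{2\eta}]$. Thus rejection sampling from the cache of $\cQ^{t-1}$-samples produces samples from $\cQ^t$ with acceptance probability at least $e^{-2\eta}$, so in expectation only $\widetilde{O}(s\eta)$ of the $s$ cached samples per round have to be replaced by fresh draws---instead of all $s$ as in \DQ. Each fresh draw can be realized as an exponential-mechanism output (Definition \ref{def:em}) with scale $\eta$ on a score of sensitivity $O(1/n)$, so it costs $\rho_0 = O(\eta^2/n^2)$-zCDP by Lemma \ref{lem:EMprivacy}. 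Composing over $T$ rounds of $\widetilde{O}(s\eta)$ fresh draws via Lemma \ref{lem:zcdpcomposition} yields a total of $\widetilde{O}(Ts\eta^3/n^2)$-zCDP, which converts via Lemma \ref{lem:cdpTodp} into the $(\varepsilon,\delta)$-DP constraint $\varepsilon^2 = \widetilde{O}( Ts\eta^3 \log(1/\delta)/n^2)$---note the extra factor of $\eta$ relative to the $\widetilde{O}(Ts\eta^2/n^2)$ cost incurred by \DQ, which is precisely what drives the improved exponent.

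In the third stage, I will set $\eta \asymp \alpha$ and $T \asymp \log|\cQ|/\alpha^2$ to force $R_\query(T) \asymp \alpha$, and $s \asymp \log(|\cX|T/\beta)/\alpha^2$ to force the concentration term to $\alpha$. Substituting into the privacy constraint and solving for $\alpha$ gives $\alpha^5 \asymp \log^3|\cQ| \cdot \log(|\cX|T/\beta) \cdot \log(1/\delta) / (n^2 \varepsilon^2)$, which is exactly the theorem's accuracy bound once $\widetilde{O}$ factors are absorbed. \textbf{Main obstacle.} The delicate step is the privacy accounting in the second stage: each accept/reject decision in the rejection-sampling loop itself depends on the private dataset, so a naive per-trial accounting would erase the $\eta$-factor saving and reproduce the \DQ\ bound. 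I expect to resolve this by arguing that, viewed end-to-end, an idealized rejection-sampling chain outputs a sample distributed exactly as a single exponential-mechanism draw from $\cQ^t$ and therefore costs only the privacy of one such draw per \emph{accepted} output, and separately bounding the probability that any chain exceeds a fixed trial budget. Making this argument rigorous under adaptive composition across the $T$ rounds---ideally via a privacy filter such as $\FILTER$---is where the bulk of the technical work will lie.
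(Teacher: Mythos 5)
Your overall architecture (MW regret for the query player, Hoeffding concentration for the sampled query distributions, exactness of rejection sampling, then parameter tuning) matches the paper's, and your accuracy analysis is essentially the paper's. But your privacy accounting has two genuine gaps. First, a fresh draw from $\cQ^t$ is an exponential mechanism whose score is $\sum_{i<t} A_D(x^i,q)$, a sum of $t-1$ terms each of sensitivity $1/n$; so one fresh draw at round $t$ costs $O(\eta^2 t^2/n^2)$-zCDP, not $O(\eta^2/n^2)$ (this is the paper's Lemma~\ref{lem:bad-samples}). Summing the $\tilde s_t = O((\eta+\gamma_t)s)$ fresh draws per round over $t\le T$ gives $\Theta(\eta^3 s T^3/n^2)$, not $\widetilde O(Ts\eta^3/n^2)$. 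Your stage-3 arithmetic silently relies on the former: plugging $\eta\asymp\alpha$, $T\asymp\log|\cQ|/\alpha^2$, $s\asymp\log(|\cX|T/\beta)/\alpha^2$ into your own expression $Ts\eta^3/n^2$ yields only $\alpha^{-1}$ and $\log^{1}|\cQ|$, not the $\alpha^{-5}$ and $\log^{3}|\cQ|$ you claim to derive.

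Second, the accept/reject bits cannot be made free by an ``end-to-end, pay only per accepted output'' argument: the accept/reject pattern is itself released, data-dependent output, and if you instead charge one exponential-mechanism draw per slot of $S_{t+1}$ you recover exactly \DQ's cost and lose the entire saving. The paper's resolution---the idea missing from your proposal---is the damping factor $e^{-\gamma_t}$ with $\gamma_t = \tfrac{1}{2}t^{-2/3}$ inserted into the update $\hat\cQ^{t+1}_q = e^{-\eta-\gamma_t}e^{-\eta A_D(x^t,q)}\cQ^t_q$. This caps every acceptance probability at $e^{-\gamma_t}<1$, so the ratio of \emph{rejection} probabilities across neighboring datasets is bounded by $1+\frac{1-e^{-\eta/n}}{e^{\gamma_t}-1}\le e^{\eta/(\gamma_t n)}$ (Lemma~\ref{lem:good-samples}); each of the $s$ accept/reject bits per round then costs $\frac{\eta^2}{2\gamma_t^2 n^2}$-zCDP, contributing $O(\eta^2 s T^{7/3}/n^2)$ in total, which is dominated by the fresh-draw term. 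The price of the damping is that a $\gamma_t$-fraction of the pool must be refilled each round, which is why $\tilde s_t=(2\gamma_t+4\eta)s$ and why $\gamma_t$ is tuned to $t^{-2/3}$. Without some such floor on the rejection probability, a rejection event can have unbounded privacy loss and the per-trial accounting genuinely fails, exactly as you feared; once the damping is in place, ordinary zCDP composition suffices and no privacy filter is needed.
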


In contrast, \dq (\emph{without} rejection sampling) obtains the same result except with 
$$\alpha = O\left(\frac{\log(|\cX|T/\beta) \cdot \log^{3}(|\cQ|) \cdot \log(1/\delta)}{n^{2} \varepsilon^{2}}\right)^{1/6}.$$ In other words, \dqrs~attains strictly better accuracy than \DQ~for the same setting of other parameters.


\begin{algorithm}[h]
\begin{algorithmic}
\caption{Rejection Sampling Dualquery}\label{alg:dqrs}
\REQUIRE Target accuracy $\alpha \in (0,1)$, target failure probability $\beta \in (0,1)$
\INPUT dataset $D$, and linear queries $q_1, \ldots, q_k \in \cQ$
\STATE Set $T  = \frac{16 \log |\cQ|}{\alpha^2}$, $\eta = \frac{\alpha}{4}$
\STATE $s = \frac{48 \log \left({3 |\cX| T}/{\beta} \right)}{\alpha^2}$
\STATE Construct sample $S_1$ of $s$ queries $\{q_i\}$ from $\cQ$ according to $\cQ^1=\mathsf{Uniform}(\cQ)$
\FOR{$t \gets 1 $ \textbf{ to } $T$}
	\STATE Let $\tilde{q} = \frac{1}{s}\sum_{q \in S_t} q$\;
	\STATE Find $x^t$ with $A_D(x^t, \tilde{q}) \geq \max_{x} A_D(x, \tilde{q}) - \alpha/4$\;
  \STATE Let $\gamma_t = \frac{1}{2t^{2/3}}$
	\FORALL{$q \in \cQ$}
		\STATE $\hat{\cQ}_q^{t+1} :=e^{-\eta-\gamma_t} \cdot  \exp\left(-\eta A_D (x^t, q) \right)\cQ_q^{t}$\;
	\ENDFOR

	\STATE Normalize $\hat{\cQ}^{t+1}$ to obtain $\cQ^{t+1}$
  \STATE Construct $S_{t+1}$ as follows
  \STATE Let $\tilde{s}_t = (2\gamma_t+4\eta) s$ and add $\tilde{s}_t$ independent fresh samples from $Q^{t+1}$ to $S_{t+1}$
	\FORALL{$q \in S_t$}
		\STATE Add $q$ to $S_{t+1}$ with probability $\hat{\cQ_q}^{t+1}/\cQ_q^{t}$
		\STATE If $|S_{t+1}|>s$, discard elements at random so that $|S_{t+1}|=s$
	\ENDFOR
\ENDFOR
\OUTPUT Sample $y_1, \ldots, y_s$
\end{algorithmic}
\end{algorithm}

The analysis of \dqrs~largely follows that of \DQ. The key difference is the analysis of the rejection sampling step, which is summarized by the following two lemmas. The first one shows that taking samples drawn from $Q=\cQ^t$ and performing rejection sampling yields samples from $P=\cQ^{t+1}$; thus $S_{t+1}$ is distributed exactly as if it were drawn from $\cQ^{t+1}$. The second lemma gives a bound on the privacy loss of the rejection sampling step.

\begin{lemma}[Rejection Sampling Accuracy]\label{lem:rs-dist2}
	Let $P$ and $Q$ be probability distributions over $\cQ$, and let $M \ge \max_{q \in \cQ} P_q / Q_q$. Sample an element of $\cQ$ as follows. Sample $q$ according to $Q$, and accept it with probability $P_q / (M\cdot Q_q)$. If $q$ is not accepted, sample $q$ according to $P$. Then the resulting element is distributed according to $P$.
\end{lemma}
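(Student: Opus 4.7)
The plan is to verify the claim by a direct law-of-total-probability computation on the output distribution of the described two-stage sampling procedure. Fix an arbitrary target element $q^* \in \cQ$ and compute the probability that the procedure outputs $q^*$ by summing over the two disjoint cases: (i) the initial draw from $Q$ was accepted, and (ii) the initial draw was rejected and then a fresh sample was drawn from $P$.

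For case (i), I would first compute the joint probability of drawing $q^*$ from $Q$ and then accepting it, which equals $Q_{q^*} \cdot \frac{P_{q^*}}{M \cdot Q_{q^*}} = \frac{P_{q^*}}{M}$. The hypothesis $M \ge \max_q P_q/Q_q$ is used here precisely to guarantee that the acceptance probability $P_{q^*}/(M \cdot Q_{q^*})$ lies in $[0,1]$, so that this step is well-defined. For case (ii), the total probability of rejection is
\[
1 - \sum_{q \in \cQ} Q_q \cdot \frac{P_q}{M \cdot Q_q} = 1 - \frac{1}{M}\sum_{q \in \cQ} P_q = 1 - \frac{1}{M},
\]
using that $P$ is a probability distribution. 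Conditioned on rejection, a fresh draw from $P$ yields $q^*$ with probability $P_{q^*}$, contributing $\left(1 - \frac{1}{M}\right) P_{q^*}$.

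Adding the two contributions gives $\frac{P_{q^*}}{M} + \left(1 - \frac{1}{M}\right) P_{q^*} = P_{q^*}$, which proves the lemma since $q^*$ was arbitrary. There is no real obstacle here: the only point that requires care is verifying that $P_q/(M \cdot Q_q) \in [0,1]$ for every $q$ in the support (which follows immediately from the choice of $M$, with the convention that if $Q_q = 0$ then $P_q = 0$ as well, since otherwise $P_q/Q_q$ would be infinite and no finite $M$ could dominate it). The key conceptual point that this lemma is being used for in the main analysis is that, unlike classical rejection sampling where one retries on rejection, here the fallback is an independent fresh draw from $P$ — and the algebra works out so that the marginal output distribution is still exactly $P$, with the useful side benefit that with probability $1/M$ one has saved a draw from $P$ (reusing the sample from $Q$), which is what enables the privacy savings in \dqrs.
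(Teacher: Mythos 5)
Your proof is correct and follows essentially the same argument as the paper: a law-of-total-probability computation showing the acceptance branch contributes $P_{q^*}/M$ and the rejection branch contributes $(1-1/M)P_{q^*}$, summing to $P_{q^*}$. Your additional remarks on well-definedness of the acceptance probability and on the support condition when $Q_q=0$ are sensible but not needed beyond what the paper states.
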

\begin{lemma}[Rejection Sampling Privacy] \label{lem:good-samples}
	The subroutine which accepts $q$ with probability $\hat{Q}^{t+1}_q /  Q^t_q = e^{-\eta-\gamma_t} \cdot \exp(-\eta A_D(x^t, q) )$ is $\eps$-differentially private for $\eps = \max\left\{\eta / n, \eta/\gamma_t n\right\}$.
\end{lemma}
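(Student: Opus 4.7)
The subroutine is a single Bernoulli trial that accepts $q$ with probability $p_D(q) := e^{-\eta-\gamma_t}\exp(-\eta\, A_D(x^t,q))$, where the only dependence on the private dataset $D$ enters through $A_D(x^t,q) = q(D) - q(x^t)$. Since the output alphabet has just two symbols, to establish pure $\eps$-differential privacy it suffices to bound the two ratios $p_D/p_{D'}$ and $(1-p_D)/(1-p_{D'})$ for any neighboring pair $D,D'$ and take the worse of the resulting exponents.

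The accept branch is immediate. Because $q$ is a statistical (linear) query over an $n$-record dataset, $|q(D)-q(D')|\le 1/n$, and the prefactor $e^{-\eta-\gamma_t}$ cancels in the ratio, leaving $p_D/p_{D'} = \exp(\eta(A_{D'}-A_D)) \le \exp(\eta/n)$.

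The reject branch is the subtle step, and is exactly where the $\gamma_t$ buffer pays off. The plan is first to establish a uniform lower bound on $1-p_D$ that does not depend on $q$: since $A_D(x^t,q) \ge -1$, we get $p_D \le e^{-\eta-\gamma_t}\cdot e^{\eta} = e^{-\gamma_t}$ and hence $1-p_D \ge 1-e^{-\gamma_t}$. Next, I would bound the numerator $|p_D - p_{D'}|$ by applying the mean value theorem to $s\mapsto e^{-\eta s}$ on $[-1,1]$, whose derivative is bounded in magnitude by $\eta e^{\eta}$; combined with the sensitivity $|A_D-A_{D'}|\le 1/n$ this yields $|p_D-p_{D'}| \le e^{-\gamma_t}\cdot \eta/n$. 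Putting these together gives $(1-p_D)/(1-p_{D'}) \le 1 + \frac{\eta\, e^{-\gamma_t}}{n(1-e^{-\gamma_t})}$, and the elementary inequality $1-e^{-\gamma_t}\ge \gamma_t e^{-\gamma_t}$ (equivalent to $e^{\gamma_t}\ge 1+\gamma_t$) simplifies this to at most $1 + \eta/(n\gamma_t) \le \exp(\eta/(n\gamma_t))$. Taking the maximum of the two exponents then gives the claimed $\eps = \max\{\eta/n,\,\eta/(n\gamma_t)\}$.

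The main obstacle is precisely this reject-branch bound: without a uniform floor on $1-p_D$ the ratio can be arbitrarily large whenever $p_D$ approaches $1$, and no pure-DP guarantee is possible. The damping factor $e^{-\gamma_t}$ in the acceptance probability is inserted exactly to force $p_D \le e^{-\gamma_t}$, trading a $1/\gamma_t$ blowup in the privacy loss on the reject side for a bound that holds uniformly over $q$ and $x^t$; the remaining manipulations are routine algebra and one Taylor-type inequality.
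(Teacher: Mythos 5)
Your proof is correct and follows essentially the same approach as the paper's: bound the accept-branch ratio directly via the $1/n$ sensitivity of $A_D(x^t,q)$, and use the uniform cap $p \le e^{-\gamma_t}$ supplied by the damping factor to control the reject-branch ratio. The only difference is cosmetic — you bound $|p_D - p_{D'}|$ via the mean value theorem and then apply $1-e^{-\gamma_t}\ge \gamma_t e^{-\gamma_t}$, whereas the paper writes $\frac{1-p}{1-p'} = 1 + \frac{1}{1/p'-1}\bigl(1-\frac{p}{p'}\bigr)$ and uses $1-e^{-\eta/n}\le \eta/n$ together with $e^{\gamma_t}-1\ge\gamma_t$; both routes are routine and yield the same bound.
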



%
%

\end{section}

\section{Experiments on the Adult dataset}
\label{sec:experiments}
%
We evaluate the algorithms presented in this paper on two different datasets: the ADULT dataset from the UCI repository \cite{dua:2019} and the LOANS dataset. The datasets used in our experiments are summarized in \cref{table:summary}. For the experiments in this section, we focus on answering $3$-way marginal and $5$-way marginal queries. We ran two sets of experiments. One looks into how well the algorithms scale with the privacy budget, and we test for privacy budget $\eps$ taking value in $0.1, 0.15, 0.2, 0.25, 0.5$, and $1$. The second one looks into how the algorithms' performance degrades when we rapidly increase the number of marginals workload to answer. To measure the accuracy of a synthetic dataset $\widehat{D}$ produced by the algorithm, we used the max additive error over a set of queries $Q$:
%
$\textrm{error}(\widehat{D}) = \max_{q\in Q} |q(D)-q(\widehat{D})|$.
\begin{table}[h]
\caption{Datasets}
\label{table:summary}
\vskip 0.15in
\begin{center}
\begin{small}
\begin{sc}
\begin{tabular}{lcccc}
\toprule
Data set & Records & Attributes & Binary \\
\midrule
ADULT    & 48842 &  15  & 500 \\
LOANS    & 42535 &  48  & 500 \\
\bottomrule
\end{tabular}
\end{sc}
\end{small}
\end{center}
\vskip -0.1in
\end{table}

Our first set of experiments (\cref{fig:vsprivacy}) fix the number of queries and evaluate the performance on different privacy levels. From the first result, we observe that \fem's max error rate increases more slowly than \hdmm's as we increase the privacy level (decrease $\eps$ value). Our second set of experiments (\cref{fig:vsworkload}) fix the privacy parameters and evaluates performance on increasing workload size (or the number of marginals). The results from this section, show that \fem's max error rate increases much more slowly than \hdmm's. From the experiments, we can conclude that at least of the case of $k$-way marginals and dataset ADULT and LOANS, \fem scales better to both the high privacy regime (low $\eps$ value) and the large workload regime (high number of queries) than the state-of-the-art \hdmm method.
\begin{figure}[h]
\centering
\subcaptionbox{ADULT dataset on 3-way marginal queries.\label{adult3eps}}
{\includegraphics[width=0.35\columnwidth]{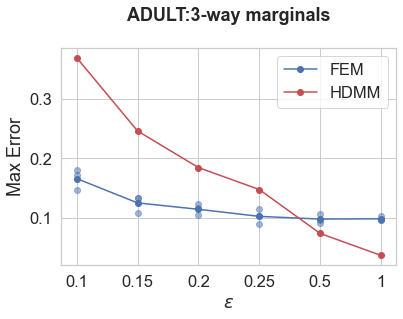}}
\subcaptionbox{LOANS dataset on 3-way marginal queries.\label{loans3eps}}
{\includegraphics[width=0.35\columnwidth]{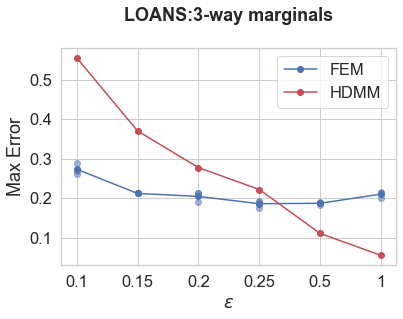}}
\subcaptionbox{ADULT dataset on 5-way marginal queries.\label{adult5eps}}
{\includegraphics[width=0.35\columnwidth]{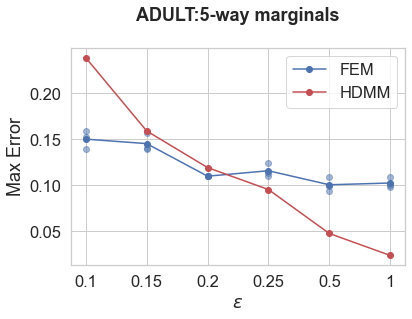}}
\subcaptionbox{LOANS dataset on 5-way marginal queries.\label{loans5eps}}
{\includegraphics[width=0.35\columnwidth]{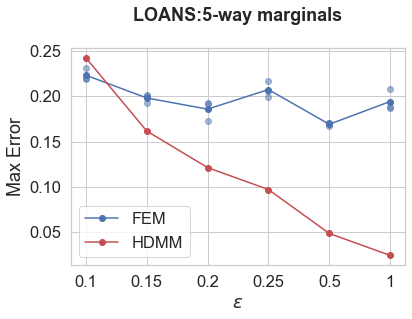}}
\caption{Max-error for 3 and 5-way marginal queries on different privacy levels.
The number of marginals is fixed at 64. We enumerate all queries for each marginal.(see \cref{def:marginals})}\label{fig:vsprivacy}
\end{figure}
\begin{figure}[h]
\centering
\subcaptionbox{ADULT dataset on 3-way marginal queries. \label{adult3}}
{\includegraphics[width=0.35\columnwidth]{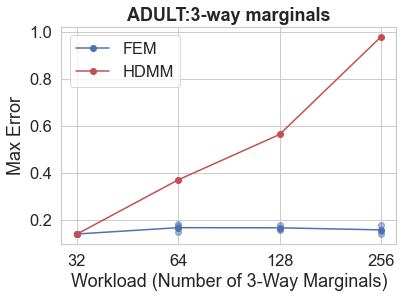}}
\subcaptionbox{LOANS dataset on 3-way marginal queries.\label{loans3}}
{\includegraphics[width=0.35\columnwidth]{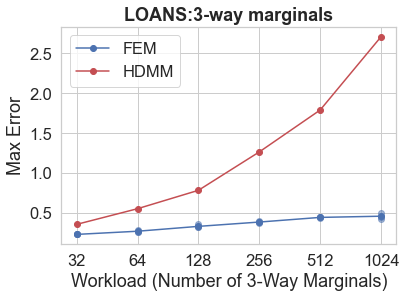}}
\subcaptionbox{ADULT dataset on 5-way marginal queries. \label{adult5}}
{\includegraphics[width=0.35\columnwidth]{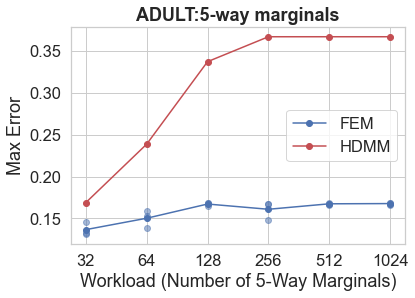}}
\subcaptionbox{LOANS dataset on 5-way marginal queries.\label{loans5}}
{\includegraphics[width=0.35\columnwidth]{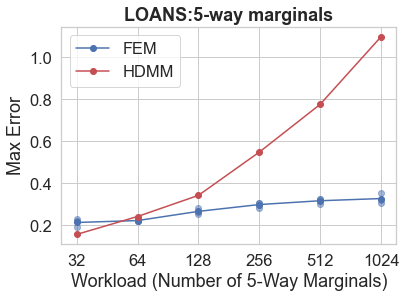}}
\caption{Max-error for increasing number of $3$ and $5$-way marginals. We enumerate all queries for each marginal (see \cref{def:marginals}).
The privacy parameter $\eps$ is fixed at $0.1$ and $\delta$ is $\tfrac{1}{n^2}$, where $n$ is the size of the dataset. .}\label{fig:vsworkload}
\end{figure}

\paragraph{Hyper-Parameter Selection}
In our implementation, algorithm \fem~ has hyperparameters $\eps_0$
and $\eta$. Both the accuracy and the run time of the algorithm depend
on how we choose these hyperparameters. For \fem~, we ran grid-search on different hyperparameter combinations and reported the one with the smallest error.
The \cref{table:femparams} summarizes the range of hyperparameters used for
the first set of experiments in \cref{fig:vsprivacy}.
Then \cref{table:femparamstwo} summarizes the range of hyperparameters used for the second set of experiments in  \cref{fig:vsworkload}.

However, in real-life scenarios, we may not have access to an optimization procedure to select the best set of hyperparameters since every time we run the algorithm, we are consuming our privacy budget.
Therefore, selecting the right combination of hyperparameters can be challenging. We briefly discuss how each parameter affects \fem's performance. The $\eta$ parameter is the scale of the random
objective perturbation term.  The data player samples a
synthetic dataset $\widehat{D}$ from the Follow The Perturbed Leader
distribution with parameter $\eta$ as in algorithm \ref{alg:fem}.  The
perturbation scale $\eta$ controls the rate of convergence of the
algorithm.  Setting this value too low can make the algorithm unstable
and leads to bad performance. If set too high, the solver in FTPL focuses too much on optimizing over the noise term.

The parameter $\eps_0$ corresponds to the privacy consumed on each
round by the exponential mechanism parameterized with $\eps_0$.  The goal is to find a query that maximizes the error on
$\widehat{D}$. Thus, the parameter $\eps_0$ controls the number of iterations. Again we face a trade-off in choosing
$\eps_0$, since setting this value too high can lead to too few
iterations giving the algorithm no chance to converge to a good
solution. If $\eps_0$ is too low, it can make the algorithm run too slow, and also it makes it hard for the query player's exponential mechanism to find queries with large errors.

\begin{table}[h]
\caption{First \fem~ hyperparameters for  \cref{fig:vsprivacy}.}
\label{table:femparams}
\vskip 0.15in
\begin{center}
\begin{small}
\begin{sc}
\begin{tabularx}{\linewidth}{c XX }
\toprule
Param & Description & Range  \\
\midrule
$\eps_0$    & Privacy budget used per round &  0.003, 0.005, 0.007, 0.009, 0.011, 0.015, 0.017, 0.019\\
$\eta$    & Scale of noise for objective perturbation &  1, 2, 3, 4  \\
\bottomrule
\end{tabularx}
\end{sc}
\end{small}
\end{center}
\vskip -0.1in
\end{table}

\begin{table}[h]
\caption{Second \fem~ hyperparameters for \cref{fig:vsworkload}.}
\label{table:femparamstwo}
\vskip 0.15in
\begin{center}
\begin{small}
\begin{sc}
\begin{tabularx}{\linewidth}{c XX }
\toprule
Param & Description & Range  \\
\midrule
$\eps_0$    & Privacy budget used per round &  0.0025, 0.003, 0.0035 \\
$\eta$    & Scale of noise for objective perturbation &  0.75, 1, 1.25  \\
\bottomrule
\end{tabularx}
\end{sc}
\end{small}
\end{center}
\vskip -0.1in
\end{table}

\paragraph{Data discretization} We discretize ADULT and LOANS datasets into binary attributes by mapping each possible value of a discrete attribute to a new binary feature. We bucket continuous attributes, mapping each bucket to a new binary feature.

\paragraph{Optimizing over $k$-way Marginals}
We represent a data record  by its one-hot binary encoding with dimension $d$, thus $\cX = \{0,1\}^d$ is the data domain. On each round $t$ the algorithm \fem~ takes as input a sequence of $t$ queries $\left(q^{(1)}, \ldots, q^{(t)}\right)$ and a random perturbation term $\sigma \sim \Lap(\eta)^d$ and solves the following  optimization problem

\begin{align}\label{eq:femobj}
\argmax_{x\in  \{0,1\}^d} \left \{ \sum_{i=1}^{t-1} q^{(i)}(x) - \langle x, \sigma\rangle \right \}
\end{align}

Let $Q_k$ be the set of $k$-way marginal queries. We can represent any $k$-way marginal query $q\in Q_k$ for $\cX$ in vector form with a $d$-dimensional binary vector $\vec{q}$ such that $\vec{q}\in \{0,1\}^d$ and $\|\vec{q}\|_1 = k$. Then we can define $q\in Q_k$ as
\begin{align*}
  q(x) =
  \begin{cases}
    1 & \text{if } k = \inner{x}{\vec{q}}  \\
    0 & \text{otherwise}
  \end{cases}
\end{align*}
Let $\bar{Q}_k$ be the set of negated $k$-way marginals. Then for any $q\in \bar{Q}_k$
\begin{align*}
  q(x) =
  \begin{cases}
    0 & \text{if } k =  \inner{x}{\vec{q}}  \\
    1 & \text{otherwise}
  \end{cases}
\end{align*}

Next we formulate the optimization problem \cref{eq:femobj} as an integer program. Given a sequence of $t$ queries $\left(q^{(1)}, \ldots, q^{(t)}\right)$ and a random perturbation term $\sigma \sim \Lap(\eta)^d$.
Let $c_i \in \{0,1\}$ be a binary variable encoding whether the query $q^{(i)}$ is satisfied.
\begin{align*}
&\max_{x \in \{0,1 \}^d } \sum_{i=1}^t c_i
- \inner{x}{\sigma}\\
&\mbox{s.t. for all } i \in \{1, \ldots. t\} \\
&\inner{x}{\vec{q}^{(i)}} \geq k c_i  &\text{if } q^{(i)}\in {Q_k}\\
&\inner{\vec{1}_d-x}{\vec{q}^{(i)}}
\geq c_i   &\text{if } q^{(i)}\in \bar{Q}_k
\end{align*}

Finally, we used the Gurobi solver for mixed-integer-programming to implement \fem's optimization oracle.

%





\paragraph{The implementation}  We ran the experiments on a machine with a 4-core Opteron processor and 192 Gb of ram. We made publicly available the see the exact implementations used for these experiments via GitHub. For
\hdmm's~implementation see \url{https://github.com/ryan112358/private-pgm/blob/master/examples/hdmm.py} and for \fem's implementation  see \url{https://github.com/giusevtr/fem}.
%


\section{Conclusion and Future Work}
In this paper, we have studied the pressing problem of efficiently generating private synthetic data. We have presented three new algorithms for this task that sidestep known worst-case hardness results by using heuristic solvers for NP-complete subroutines. All of our algorithms are equipped with formal privacy and utility guarantees and they are oracle-efficient -- i.e., our algorithms are efficient as long as the heuristic solvers are efficient.

There is a very real need for practical private synthetic data generation tools and a dearth of solutions available; the scientific literature offers mostly exponential-time algorithms and negative intractability results. This work explores one avenue for solving this conundrum and we hope that there is further work both extending this line of work and exploring entirely new approaches. Our experimental evaluation demonstrates that our algorithms are promising and supports our theoretical results. However, our experiments are relatively rudimentary. In particular, we invested most time into optimizing the most promising algorithm \fem. An immediate question is whether further optimization of the other two algorithms could yield better results.

\newpage
\bibliography{main}
\bibliographystyle{icml2020}

\appendix
\onecolumn
\newpage

\appendix

\section{Missing Proofs in Section ~\ref{sec:noregretdynamics}}
%
%
%
%

.

This section describes the accuracy analysis of \fem~and \sfem~ in detail. The accuracy proof proceeds in two steps. First we show that the
sample distribution $\widehat{D^t}$ played by the data player is close
the true distribution $D^t$. Then we show that both the query player and and data player are following no-regret strategies.
Then, by Theorem~\ref{meta-theorem}, we
show that algorithms \fem and \sfem find an approximate equilibrium
of the game dynamics described in section \ref{sec:queryreleasegame}.

To bound the deviation error in our sampling from the FTPL
distribution, we use the following Chernoff bound.

\begin{lemma}[Chernoff Bound] \label{def:chernoff} Let
  $X_1, \ldots, X_m$ be i.i.d random variables such that
  $0 \le X_i \le 1$ for all $i$. Let
  $S = \frac{1}{m} \sum_{i=1}^m X_i$ denote their mean and let
  $\mu = \mathbb{E}[S]$ denote their expected mean. Then,
$$\pr{|S - \mu| > t ] \le 2 \exp{(-2m t^2)}}$$
\end{lemma}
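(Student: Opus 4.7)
The plan is to derive this via the standard Cramér--Chernoff exponential-moment method, which for bounded summands specializes to Hoeffding's inequality. The whole argument is classical; since the authors only need the bound as a black-box concentration tool in the sampling error analysis of \fem~and \sfem, one could simply cite Hoeffding's original paper, but the derivation is short enough to sketch.

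First I would handle the upper tail $\pr{S - \mu \ge t}$. For any $\lambda > 0$, Markov's inequality applied to the monotone transform $x \mapsto e^{\lambda x}$ gives
\[
\pr{S - \mu \ge t} \;\le\; e^{-\lambda t}\, \bbE\!\left[e^{\lambda(S - \mu)}\right].
\]
Writing $S - \mu = \frac{1}{m}\sum_{i=1}^m (X_i - \bbE[X_i])$ and using independence, the moment generating function factors as $\prod_{i=1}^m \bbE\!\left[e^{(\lambda/m)(X_i - \bbE[X_i])}\right]$. Each factor is the MGF of a zero-mean random variable supported on an interval of length $1$, so Hoeffding's lemma bounds it by $\exp(\lambda^2/(8 m^2))$. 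Multiplying $m$ such bounds yields $\bbE[e^{\lambda(S - \mu)}] \le \exp(\lambda^2/(8m))$, and substituting back produces the bound $\exp\!\left(\lambda^2/(8m) - \lambda t\right)$. Optimizing the resulting quadratic in $\lambda$ by setting $\lambda = 4 m t$ yields $\exp(-2 m t^2)$.

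The lower tail $\pr{\mu - S \ge t}$ follows by applying the identical argument to the variables $-X_i$, which are supported on an interval of length $1$ and have mean $-\bbE[X_i]$; the bound is symmetric. A union bound over the two one-sided events produces the prefactor of $2$ and completes the proof.

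The only non-trivial ingredient is Hoeffding's lemma itself, which follows from a short convexity or Taylor-expansion argument applied to the log-MGF of a bounded centered random variable. I would defer that to a citation rather than reproduce it. No step here is conceptually hard; the one place to be careful is tracking the constant in the exponent, since an off-by-a-factor-of-four normalization error in Hoeffding's lemma (writing $(b-a)^2/2$ instead of $(b-a)^2/8$) would propagate to the wrong constant in the final $\exp(-2 m t^2)$ bound, so I would verify the optimization in $\lambda$ explicitly.
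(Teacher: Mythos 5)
Your derivation is the standard Cramér--Chernoff/Hoeffding argument and is correct, including the constant: with each factor bounded by $\exp(\lambda^2/(8m^2))$, the optimization at $\lambda = 4mt$ indeed gives $\exp(-2mt^2)$, and the union bound over the two tails supplies the factor of $2$. The paper states this lemma as a known black-box concentration bound and offers no proof of its own, so there is nothing to compare against; your write-up is a valid (and standard) justification.
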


\begin{lemma}\label{lemma:datasampling}
Let $\beta \in (0, 1)$ and let $D^t$ be the true distribution over $\cX$. Suppose we draw
$$s  = \frac{8 \log{(4T|\cQ|/\beta)}}{\alpha^2}$$ samples $\{ x_i^t \}$ from $D^t$ to form $\widehat{D^t}$. Then for all $q\in\cQ$,
 with probability at least $1 - \beta / 2$, we have
 \begin{align*}
   \left|\frac{1}{s} \sum_{i=1}^s q(x_i^t)- q(D^t)\right| < \frac{\alpha}{4} \text{ for all } 0 \le t \le T
 \end{align*}
\end{lemma}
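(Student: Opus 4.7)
The plan is to apply the Chernoff bound stated in Lemma~\ref{def:chernoff} pointwise for each pair $(q,t)$, and then take a union bound across $\cQ$ and the $T$ rounds.

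First, fix any query $q \in \cQ$ and any round $t$. The samples $x_1^t, \ldots, x_s^t$ are drawn i.i.d.\ from $D^t$, and since queries in $\cQ$ take values in $[0,1]$, the random variables $X_i := q(x_i^t)$ satisfy the hypothesis of Lemma~\ref{def:chernoff}, with expectation $\mathbb{E}[X_i] = q(D^t)$. Applying the Chernoff bound with deviation $\alpha/4$ yields
\[
\Pr\!\left[\left|\tfrac{1}{s}\sum_{i=1}^s q(x_i^t) - q(D^t)\right| > \tfrac{\alpha}{4}\right] \;\le\; 2\exp\!\left(-2s\cdot \tfrac{\alpha^2}{16}\right) \;=\; 2\exp\!\left(-\tfrac{s\alpha^2}{8}\right).
\]

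Next, I would plug in the prescribed sample size $s = 8\log(4T|\cQ|/\beta)/\alpha^2$, which turns the right-hand side into $2\exp(-\log(4T|\cQ|/\beta)) = \beta/(2T|\cQ|)$. A union bound over the at most $T|\cQ|$ pairs $(q,t)$ then gives total failure probability at most $T \cdot |\cQ| \cdot \beta/(2T|\cQ|) = \beta/2$, exactly as required.

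There is no real obstacle here: the proof is a textbook Chernoff-plus-union-bound argument, and the choice of $s$ is reverse-engineered to make the constants work out. The only mild subtlety is ensuring that the rounds range matches: if one interprets ``$0 \le t \le T$'' as $T+1$ rounds, one could absorb the extra factor into the constant in the $\log$ (the statement uses $4T|\cQ|/\beta$ which already has some slack), so the union bound still goes through without changing the asymptotics.
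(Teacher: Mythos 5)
Your proof is correct and matches the paper's own argument essentially verbatim: the same Chernoff bound from Lemma~\ref{def:chernoff} applied to the variables $q(x_i^t)$ with deviation $\alpha/4$, the same substitution of $s$ to get failure probability $\beta/(2T|\cQ|)$ per pair, and the same union bound over the $T|\cQ|$ pairs. Your remark about the ``$0 \le t \le T$'' index range is a fair observation of a small sloppiness the paper also glosses over, and your fix (absorbing the extra round into the slack in the logarithm) is the right one.
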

\begin{proof}
For any fixed $t$, note that $\frac{1}{s} \sum_{i=1}^s q(x_i^t)$ is the average of the random variables $q(x_1^t), q(x_2^t), \ldots, q(x_s^t)$. Also $\mathbb{E}[q(x^t)] = q(D^t)$ for all $0 \le t \le T$. Thus by the Chernoff bound and our choice of $s$,
$$\pr{\left|\frac{1}{s} \sum_{i=1}^s q(x_i^t)- q(D^t)\right| > \frac{\alpha}{4}} \le 2\exp{(-s \alpha^2 /8)} = \frac{\beta}{2T|\cQ|}$$ A union bound over all $T$ rounds and all
$|\cQ|$ queries gives a total fail probability of at
most $\beta/2$ as desired.
\end{proof}


The query player following the Exponential Mechanism has bounded regret with high probability.
\begin{lemma}[Query Player's Regret]\label{lemma:query-regret}
Let $n$ be the dataset size. For any $\rho>0$, query class $\cQ$, round $T$, and  any sequence of actions $D_1, \ldots, D_T$ by the data
player,  with probability $1-\beta/2$
the query player from algorithm \ref{alg:framework} achieves an
average regret bound of
\begin{align*}
R_\query(T)  \leq
\frac{1}{n}\sqrt{\frac{2T}{\rho}} \log{\left(\tfrac{2T|\mathcal{Q}|}{\beta}\right)}
\end{align*}
\end{lemma}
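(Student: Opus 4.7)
The plan is to derive the regret of the query player as a per-round quantity using the utility guarantee of the exponential mechanism (Theorem~\ref{thm:exp-error}), and then sum over the $T$ rounds. The starting observation is that since the query player plays a point mass $Q^t = \delta_{q_t}$, we have $A(D^t, Q^t) = q_t(D) - q_t(\widehat{D^t}) = S_t(D, q_t)$, and also $\max_{q \in \cQ} A(D^t, q) = \max_{q \in \cQ} S_t(D, q) =: \mathrm{OPT}_{S_t}(D)$. Using the elementary inequality $\max_{q \in \cQ} \sum_{t=1}^T A(D^t, q) \le \sum_{t=1}^T \max_{q \in \cQ} A(D^t, q)$, the regret is bounded by
\[
T \cdot R_\query(T) \le \sum_{t=1}^T \bigl(\mathrm{OPT}_{S_t}(D) - S_t(D, q_t)\bigr).
\]

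Next I would bound each summand via Theorem~\ref{thm:exp-error}. The score function $S_t(D, q) = q(D) - q(\widehat{D^t})$ has sensitivity $\Delta_{S_t} = 1/n$: conditional on the previously produced $\widehat{D^t}$, changing one record in $D$ only changes $q(D)$ by $1/n$, while $q(\widehat{D^t})$ does not depend on $D$ at this step. The exponential mechanism in Algorithm~\ref{alg:framework} is invoked with parameter $\sqrt{2\rho_0}$ where $\rho_0 = \rho / T$, so Theorem~\ref{thm:exp-error} gives, with probability at least $1 - \beta'$ per round,
\[
\mathrm{OPT}_{S_t}(D) - S_t(D, q_t) \le \frac{2 \Delta_{S_t}}{\sqrt{2\rho_0}} \log\!\left(\tfrac{|\cQ|}{\beta'}\right) = \frac{1}{n}\sqrt{\frac{2T}{\rho}} \log\!\left(\tfrac{|\cQ|}{\beta'}\right).
\]

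Setting $\beta' = \beta/(2T)$ and taking a union bound over the $T$ rounds yields the desired failure probability $\beta/2$, and after substituting back and dividing by $T$ we obtain
\[
R_\query(T) \le \frac{1}{n}\sqrt{\frac{2T}{\rho}} \log\!\left(\tfrac{2T|\cQ|}{\beta}\right),
\]
which is exactly the claimed bound. The only subtle point, and the step I would be most careful about, is the sensitivity calculation: one must verify that the previously sampled queries $q_0, \dots, q_{t-1}$ (and therefore $\widehat{D^t}$) can be treated as fixed when bounding $\Delta_{S_t}$, so that the EM call at round $t$ really operates on a score with sensitivity $1/n$. This is justified because at round $t$, conditioning on the realized transcript of previous rounds, $S_t(\cdot, q)$ depends on the dataset only through the linear statistic $q(D)$. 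Everything else is a routine application of EM utility plus a union bound.
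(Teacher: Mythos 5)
Your proof is correct and follows essentially the same route as the paper's: a per-round application of the exponential mechanism's utility guarantee (Theorem~\ref{thm:exp-error}) with sensitivity $\Delta_{S_t}=1/n$ and parameter $\sqrt{2\rho_0}$, a union bound over the $T$ rounds with per-round failure probability $\beta/(2T)$, and the substitution $\rho_0=\rho/T$. Your explicit justification of the step $\max_{q}\sum_t A(D^t,q)\le\sum_t\max_q A(D^t,q)$ and of why $\widehat{D^t}$ can be conditioned on in the sensitivity calculation are details the paper leaves implicit, but the argument is the same.
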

\begin{proof}
On each round the query player calls the exponential mechanism
with parameter $\sqrt{2\rho_0}$.
Since the sensitivity of the query player's score function $\Delta_S$ is $1/n$, then with probability $1-\beta/2T$ the error for each is round is at most $\frac{2/n}{\sqrt{2\rho_0}} \log{\left(2T|\cQ|/\beta\right)}$ by theorem \ref{thm:exp-error}. Applying union bound over $T$ rounds, with probability $1-\beta/2$ the query player's average regret for $T$ rounds is
\begin{align*}
\max_{q \in \cQ} \frac{1}{T} \sum_{t=1}^T A(\widehat{D^t}, q) - \frac{1}{T} \sum_{t=1}^T A(\widehat{D^t}, q^t)
\leq \frac{1}{T}\sum_{t=1}^T\frac{2/n}{\sqrt{2\rho_0}} \log{\left(2T|\cQ|/\beta\right)}
\le \frac{1}{n}\sqrt{\frac{2T}{\rho}} \log{\left(2T|\mathcal{Q}|/\beta\right)}
\end{align*}
where the last inequality follows from $\rho_0 = \tfrac{\rho}{T}$.
\end{proof}
Now we will provide the accuracy guarantees for \fem and \sfem by
analyzing data player's regret in the two algorithms.

\begin{lemma}[Data Player's Regret in \fem]\label{lemma:data-regret}
Let $d=\log(\cX)$.
For any round $T$ and target accuracy $\alpha>0$, there exist a parameters $\eta$ and $s$ such that if data player from algorithm \fem (\ref{alg:fem}) plays the sequence of distributions
approximations $\widehat{D^1}, \widehat{D^2} \ldots \widehat{D^T}$,
and the query player plays any adversarially chosen sequence of
queries $q_1,\ldots,q_T\in\cQ$,
then the data player, with probability at least $1- \beta/2$, achieves an average regret bound of
\begin{align*}
R_\data^{\mathrm{FEM}}(T)
 \leq \tfrac{\alpha}{4} + \tfrac{5}{2}d^{3/2}\sqrt{\frac{1}{T}}
\end{align*}
\end{lemma}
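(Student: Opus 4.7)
The plan is to bound $R_\data^{\mathrm{FEM}}(T)$ by splitting it into two pieces: the sampling deviation between the empirical distribution $\widehat{D^t}$ (the output of \cref{alg:fem}) and the true Non-Convex FTPL distribution $D^t$ that the samples come from, plus the regret of $D^t$ itself. Concretely, for any comparator $x^\star \in \cX$ I will write
\begin{align*}
\sum_{t=1}^T A(\widehat{D^t}, q_t) - \sum_{t=1}^T A(x^\star, q_t)
= \underbrace{\sum_{t=1}^T \bigl(A(\widehat{D^t}, q_t) - A(D^t, q_t)\bigr)}_{\text{sampling error}} + \underbrace{\sum_{t=1}^T A(D^t, q_t) - \sum_{t=1}^T A(x^\star, q_t)}_{\text{true FTPL regret}}.
\end{align*}

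For the sampling-error term, I will apply \Cref{lemma:datasampling} with $s = 8\log(4T|\cQ|/\beta)/\alpha^2$ samples per round. Since $A(\cdot, q_t) = q_t(D) - q_t(\cdot)$ is a shifted linear query in its first argument and $\widehat{D^t}$ is the uniform distribution over $s$ i.i.d.\ draws from $D^t$, the lemma guarantees $|A(\widehat{D^t}, q) - A(D^t, q)| \leq \alpha/4$ uniformly in $t \in [T]$ and $q \in \cQ$ with probability at least $1 - \beta/2$. Averaging over $T$ yields the $\alpha/4$ contribution in the stated bound.

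For the true FTPL regret term, I will invoke the regret guarantee of \algoA from \cite{online2019suggala}, which bounds $\Ex{R_{\text{NC}}(T)} \leq 125 \eta d^{2} T + (1 + \log d) d/\eta + 21\alpha' T/20$; because \cref{alg:fem} calls the optimization oracle exactly, one can take $\alpha' = 0$ so the third term vanishes. Balancing the first two terms by choosing $\eta = \Theta(1/\sqrt{dT})$ makes each of them scale like $O(d^{3/2} \sqrt{T \log d})$, so the per-round average is $O(d^{3/2} \sqrt{\log d/T})$, matching $\tfrac{5}{2} d^{3/2} \sqrt{1/T}$ up to logarithmic factors that can be absorbed into the leading constant by tuning $\eta$ and using $T$ large enough.

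The main obstacle I anticipate is converting the in-expectation FTPL guarantee into the high-probability statement required by the lemma. My plan is to route the randomness through \Cref{lemma:datasampling}: conditional on the high-probability event that all $\widehat{D^t}$ are $\alpha/4$-close to $D^t$ uniformly in $q$, the realized cumulative loss $\sum_t A(\widehat{D^t}, q_t)$ differs from its expectation under the FTPL draws by at most $T\alpha/4$, so the expected FTPL regret bound transfers directly to a high-probability bound on the empirical regret. If a cleaner route is needed, I can instead invoke Markov's inequality on the FTPL regret and absorb a $1/\beta$ factor into constants. Either way, the two terms $\alpha/4$ and $\tfrac{5}{2} d^{3/2} \sqrt{1/T}$ emerge from the two summands above after choosing $s$ and $\eta$ as indicated.
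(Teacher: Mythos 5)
Your proposal matches the paper's proof essentially step for step: the same decomposition into a sampling-deviation term controlled by \Cref{lemma:datasampling} (giving the $\alpha/4$) plus the true FTPL regret, the same invocation of the \algoA bound from \cite{online2019suggala} with a perfect oracle, and the same tuning $\eta = \Theta(1/\sqrt{dT})$ yielding the $\tfrac{5}{2}d^{3/2}\sqrt{1/T}$ term. Your first suggested route for the expectation-to-high-probability conversion (apply the expected-regret bound to the true distributions $D^t$, which holds deterministically by linearity, and bridge to $\widehat{D^t}$ via the sampling lemma) is exactly what the paper does, so the Markov fallback is unnecessary.
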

%


\begin{proof}
For the data player, we use the  Non-Convex-FTPL algorithm for
non-convex losses due to \cite{online2019suggala}.
Recall that, given a sequence of queries $q_1,\ldots,  q_T$ the data player in \cref{alg:framework} wants to choose actions $x_1,\ldots,x_T$ to maximize the objective
\begin{align*}
  \sum_{t=1}^T q_i(x_t)
\end{align*}
thus, the regret of the data player can be writen as
\begin{align*}
R_\data^{\mathrm{FEM}}(T)=
  \frac{1}{T}\max_{x\in\cX}\sum_{t=1}^T q_i(x) -  \frac{1}{T}\sum_{t=1}^T q_i(x_t)
\end{align*}

The results from \cite{online2019suggala} say that if an online learner chooses an action from some decision space with $\ell_\infty$ diameter $D$, the loss functions are $L$-Lipschitz for $\ell_1$ norm, and the learner has access to an $(\alpha, \beta)$-approximate optimization oracle then the learner has expected average regret of the learner bounded by
\begin{align}\label{eq:ncftplregret}
\Ex{{R}(T)} =
 125\eta L d^2 D+\frac{\beta d}{20\eta L}+2\beta d+ \frac{\alpha}{20L}
\end{align}
Suppose that the data player chooses one action on each round by solving the following optimization problem
\begin{align}\label{eq:objective}
  x_t \in \argmin_{x\in\{0, 1\}^d} \left\{ \sum_{i=1}^{t-1} q_i(x) -  \inner{x}{\sigma_t} \right\}
\end{align}
where each $\sigma_t\in \mathbb{R}^d$ is sampled from the exponential distribution, and each $q_i\in\cQ$ is chosen by adversarially.
%
We assume that on each round $t$, the data player plays a single record
$x_t\in \cX$ from the data space $\cX=\{0,1\}^d$  which as $\ell_\infty$
diameter of $1$.
Furthermore, each $q_i$ is $1$-Lipschitz, this follows because each query is bounded in $[0,1]$  and the input are $0$-$1$ vectors from the set $\{0,1\}^d$. Therefore if $x\neq y$ then at last one coordinate in $x$ and $y$ differ by one, hence $\|x-y\|_1\geq 1$. Then the following holds for all $x,y\in \cX$ such that $x\neq y$ and all $q\in\cQ$:
 \begin{align*}
|q(x) - q(y) |\leq 1 \leq \|x-y\|_1
 \end{align*}
We assume that our oracle is a perfect optimizer so $\alpha' = 0$ and
$\beta = 0$. Therefore, we replace constants $D=1$ and $L=1$ in equation
\ref{eq:ncftplregret} and expected regret of the data player is bounded
by
\begin{align*}
\Ex{R(x_1,\ldots,x_T)} =
\mathbb{E}_{x_1,\ldots,x_T} \left[
\frac{1}{T}\max_{x\in\cX}\sum_{t=1}^T q_i(x) -  \frac{1}{T}\sum_{t=1}^T q_i(x_t)
\right]
\leq 125 \eta d^2 + \frac{d}{20\eta T}
\end{align*}
Each $x_t$ is a random variable sampled from its \textit{true distribution} $D^t\in \Delta\cX$, which is given by \cref{eq:objective}.
Now suppose that on each round we could play the true distribution $D^t$ instead of $x_t$, then we can write the regret without the expectation
\begin{align*}
\frac{1}{T}\max_{D\in\Delta\cX}\sum_{t=1}^T q_i(D) -  \frac{1}{T}\sum_{t=1}^T q_i(D^t) \leq 125 \eta d^2 + \frac{d}{20\eta T}
\end{align*}
We want to approximate $D^t$. To that end, the algorithm creates a set $\widehat{D}^t$ of $s$ samples from the distribution $D^t$ by repeatedly calling the optimization oracle with different perturbation values sampled from the exponential distribution with parameter $\eta$.
 From Lemma \ref{lemma:datasampling}, we know that there exist a sample size $s$ such that
 with probability at least $1-\beta/2$, the average error per round of sample $\widehat{D^t}$ from the true distribution $D^t$ is $\alpha/4$. Hence, with probability at least $1-\beta/2$, the average regret per round for the data player playing the sample distribution $\widehat{D^t}$ is
 \begin{align*}
\frac{1}{T}\max_{D\in\Delta\cX}\sum_{t=1}^T q_i(D) -  \frac{1}{T}\sum_{t=1}^T q_i(D^t)
\leq \tfrac{\alpha}{4} + 125 \eta d^2 + \frac{d}{20\eta T}
 \end{align*}
Setting $\eta = \sqrt{\frac{1}{2500Td}}$, we have
\begin{align*}
\frac{1}{T}\max_{D\in\Delta\cX}\sum_{t=1}^T q_i(D) -  \frac{1}{T}\sum_{t=1}^T q_i(D^t)
\leq \tfrac{\alpha}{4} + \sqrt{\left( 125d^2\right)\left(\frac{d}{20T} \right)}
=\tfrac{\alpha}{4} + d^{3/2}\sqrt{\frac{125}{20}\frac{1}{T}}
\end{align*}
\end{proof}

%

\begin{lemma}[Data Player's Regret in \sfem]\label{lemma:sepfemdataregret}
Let $d=\log(\cX)$ and $M=|\sepQ|$.
For any round $T$ and target accuracy $\alpha>0$, there exist a parameters $\eta$ and $s$ such that if data player from algorithm \sfem \ref{alg:sepfem} plays the sequence of distributions
approximations $\widehat{D^1}, \widehat{D^2} \ldots \widehat{D^T}$,
and the query player plays any adversarially chosen sequence of
queries $q_1,\ldots,q_T\in\cQ$,
then the data player, with probability at least $1- \beta/2$, achieves an average expected regret bound of
\begin{align*}
R_\data^{\mathrm{sepFEM}}(T)
 \leq \tfrac{\alpha}{4} +  M^{3/4}d^{1/2}\sqrt{\frac{40}{T}} \end{align*}
\end{lemma}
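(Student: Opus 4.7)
The plan is to mirror the structure of the proof of Lemma~\ref{lemma:data-regret} for \fem, but invoking the regret guarantee of \algoB (Separator-FTPL) from \cite{syrgkanis2016efficient} in place of \algoA. The data player chooses $x_t \in \cX = \{0,1\}^d$ by perturbing the cumulative loss with a sum $\sum_{i=1}^M \sigma_{j,i}\tilde{q}_i(x)$, where $\sigma_{j,i}\sim\Lap(\eta)$ and $\{\tilde q_i\}_{i=1}^M = \sepQ$ is the separator set. First I would verify the hypotheses required by Separator-FTPL: the action set has $\ell_\infty$-diameter $1$ (since $\cX = \{0,1\}^d$); the queries $q_i\in\cQ$ are $1$-Lipschitz in the $\ell_1$ metric (as argued in the proof of Lemma~\ref{lemma:data-regret}); and $\sepQ$ is a valid separator for $\cQ$ of cardinality $M$. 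Under these conditions the Separator-FTPL guarantee yields an expected per-round regret bound of the form
\[
  \frac{1}{T}\max_{D\in \Delta\cX}\sum_{t=1}^T q_t(D) - \frac{1}{T}\sum_{t=1}^T q_t(D^t) \;\le\; c_1\,\eta\, d + \frac{c_2\, M^{3/2}}{\eta\, T},
\]
for constants $c_1,c_2$ extracted from the Syrgkanis--Luo regret theorem applied to the cumulative query loss with separator-based perturbation, where $D^t$ denotes the \emph{true} distribution from which $\widehat{D^t}$ is sampled.

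Next I would pass from the true distribution $D^t$ to the empirical distribution $\widehat{D^t}$ obtained by drawing $s$ independent samples from it. By Lemma~\ref{lemma:datasampling}, choosing $s = \Theta(\log(T|\cQ|/\beta)/\alpha^2)$ ensures that, except with probability $\beta/2$, every query $q\in\cQ$ is evaluated within $\alpha/4$ of its expectation under $D^t$ on every round $t$. This translates directly into an extra additive $\alpha/4$ term in the per-round regret, giving
\[
  R_\data^{\mathrm{sepFEM}}(T) \;\le\; \tfrac{\alpha}{4} + c_1\,\eta\, d + \frac{c_2\, M^{3/2}}{\eta\, T}.
\]

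The final step is to tune $\eta$. Balancing the two non-trivial terms by setting $\eta \propto \sqrt{M^{3/2}/(dT)}$ eliminates the dependence on $\eta$ and produces a combined bound proportional to $\sqrt{d\, M^{3/2}/T} = M^{3/4} d^{1/2}/\sqrt{T}$. Plugging in the explicit constants from the Separator-FTPL theorem should yield exactly the claimed $M^{3/4} d^{1/2}\sqrt{40/T}$.

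The main obstacle I anticipate is bookkeeping the constants and Lipschitz/diameter parameters when instantiating the Separator-FTPL regret theorem of \cite{syrgkanis2016efficient} for our particular loss class (queries from $\cQ$) and perturbation (Laplace noise over $\sepQ$), especially since the paper's statement is phrased in the regime $M=d$. I would need to re-derive or carefully restate that theorem with $M = |\sepQ|$ decoupled from $d$ so that the $M^{3/2}/\eta$ and $\eta d$ terms appear cleanly; the rest of the argument (Chernoff via Lemma~\ref{lemma:datasampling} and tuning $\eta$) is routine.
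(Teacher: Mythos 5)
Your overall plan is the same as the paper's: invoke the Separator-FTPL regret guarantee of \cite{syrgkanis2016efficient} for the true FTPL distributions $D^1,\dots,D^T$, add an $\alpha/4$ sampling error via Lemma~\ref{lemma:datasampling} to pass to the empirical distributions $\widehat{D^t}$, and then tune $\eta$ to balance the two $\eta$-dependent terms. The structure and the conclusion are right.

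The one substantive discrepancy is the form of the Separator-FTPL bound you quote. You write the per-round regret as $c_1\eta d + c_2 M^{3/2}/(\eta T)$, with the linear-in-$\eta$ term scaling with the ambient dimension $d$ and the $1/\eta$ term scaling with $M^{3/2}$. The guarantee the paper actually uses is
\[
4\eta M + \frac{10}{\eta}\,M^{1/2}\,\frac{\log N}{T},
\]
where $N$ is the size of the policy space; here a policy is a point of $\cX=\{0,1\}^d$, so $\log N = d$ and the bound reads $4\eta M + \frac{10}{\eta}M^{1/2}\frac{d}{T}$. That is, $M$ (the number of Laplace-perturbed separator queries) governs the linear term, and $d$ enters only logarithmically through the policy count --- the reverse of your attribution. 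Relatedly, the hypotheses you propose to verify ($\ell_\infty$-diameter of the action set and $1$-Lipschitzness of the queries) are the ones needed for Non-Convex-FTPL in Lemma~\ref{lemma:data-regret}; for the separator-based algorithm the relevant inputs are instead the separator property of $\sepQ$ and the finiteness of the policy class. Your final answer still comes out as $M^{3/4}d^{1/2}/\sqrt{T}$ only because the product of your two terms happens to equal the product of the correct two terms ($dM^{3/2}/T$ in both cases), so the geometric mean after balancing coincides; the optimal $\eta$ you would choose, $\eta\propto\sqrt{M^{3/2}/(dT)}$, differs from the paper's $\eta=\sqrt{5d/(2M^{1/2}T)}$. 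Before the proof is complete you would need to replace the guessed regret form with the actual statement from \cite{syrgkanis2016efficient}; once that is done, the remaining steps (Chernoff bound via Lemma~\ref{lemma:datasampling} and tuning $\eta$) go through exactly as in the paper.
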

\begin{proof}
Let $M =| \sepQ | $ be the size of the separator set of the query class and $d = \log(|\cX|)$ is the dimension of the data domain. We use the contextual bandits algorithm on the small separator setting from \cite{syrgkanis2016efficient} which achieves expected regret
\begin{align*}
4\eta M+ \tfrac{10}{\eta}M^{1/2} \log(N) \frac{1}{T}
\end{align*}
where $N$ is the size of the policy space of the learner.

Suppose that the data player chooses $x_t$ on each round $t$, following \cref{alg:sepfem} due to \cite{syrgkanis2016efficient}. In our setting we regard any datum $x_t\in\cX=\{0,1\}^d$ as the policy played by the data player which maps queries to the set $\{0,1\}$. Therefore the policy space has size $2^{d} = |\cX|$. Then according to \cite{syrgkanis2016efficient} and replacing $N$ by $2^d$ we get that the data player achieves expected regret bounded by
\begin{align*}
\mathbb{E}_{x_1,\ldots,x_T} \left[
\frac{1}{T}\max_{x\in\cX}\sum_{t=1}^T q_i(x) -  \frac{1}{T}\sum_{t=1}^T q_i(x_t)
\right]
\leq 4\eta M+ \tfrac{10}{\eta}M^{1/2}  \frac{d}{T}
\end{align*}

Each $x_t$ is a random variable sampled from its \textit{true distribution} $D^t\in \Delta\cX$.
Now suppose that on each round we could play the true distribution $D^t$ instead of $x_t$, then we can write the regret without the expectation
\begin{align*}
\frac{1}{T}\max_{D\in\Delta\cX}\sum_{t=1}^T q_i(D) -  \frac{1}{T}\sum_{t=1}^T q_i(D^t)
\leq
4\eta M+ \tfrac{10}{\eta}M^{1/2} \frac{d}{T}
\end{align*}

We want to approximate $D^t$. To that end, the algorithm creates a set $\widehat{D}^t$ of $s$ samples from the distribution $D^t$ by repeatedly calling the optimization oracle with different perturbation values.

From Lemma \ref{lemma:datasampling}, we know that with probability at least $1-\beta/2$, the average error per round of sample distribution $\widehat{D^t}$ from the true distribution $D^t$ is $\alpha/4$. Hence, with probability at least $1-\beta/2$, the average regret per round for the data player playing the sample distribution $\widehat{D^t}$ is
 \begin{align*}
\frac{1}{T}\max_{D\in\Delta\cX}\sum_{t=1}^T q_i(D) -  \frac{1}{T}\sum_{t=1}^T q_i(D^t)
\leq \tfrac{\alpha}{4} + 4\eta M+ \tfrac{10}{\eta}M^{1/2} d \frac{1}{T}
\end{align*}
Setting $\eta = \sqrt{\frac{5d }{2M^{1/2}T}}$.
Then the regret of the data player is
\begin{align*}
R_\data^{\mathrm{sepFEM}}(T) = \frac{\alpha}{4}+M^{3/4}d^{1/2}\sqrt{\frac{40}{T}}
\end{align*}
\end{proof}

\textbf{Proof of Corollary~\ref{thm:femaccuracy}.}
\begin{proof}
From Lemma \ref{lemma:data-regret} and Lemma \ref{lemma:query-regret}, let $R_\data^{\mathrm{FEM}}(T)$ and $R_\query(T)$ be the upper bounds for the average error of the data and query player respectively with probability at least $1-\beta/2$. Then, with probability at least $1-\beta$ due to the union bound over 2 events, $\alpha$ is the average regret for all rounds by Theorem \ref{meta-theorem}:
\begin{align*}
\alpha &= R_\data^{\mathrm{FEM}}(T) + R_\query(T) \\
& = \frac{\alpha}{4} +\tfrac{5}{2}d^{3/2}\sqrt{\frac{1}{T}}
+ \frac{1}{n}\sqrt{\frac{2T}{\rho}} \log{\left(2T|\mathcal{Q}|/\beta\right)} \\
\end{align*}
To solve for $\alpha$ we first move the first term from the right
hand side. Then we minimize the expression on the left side by
setting the two terms equal to each other.
We ignore the $\log(T)$ term and minimize $\frac{5}{2}d^{3/2}\sqrt{\frac{1}{T}} + \frac{1}{n}\sqrt{\frac{2T}{\rho}}\log(|\cQ|)$ by selecting the correct choice
of $\sqrt{T}$.
That is, setting
$T = \frac{\tfrac{5d^{3/2}}{2}}{\sqrt{\tfrac{2}{\rho n^2}}\log(|\cQ|)}$ we get
\begin{align*}
\tfrac{3\alpha}{4}\leq& \sqrt{\left(\tfrac{5d^{3/2}}{2}\right)
\left( \sqrt{\frac{2}{\rho n^2}\log(|\cQ|)}\right)}\log(2T/\beta) \\
=& \frac{d^{3/4}}{\rho^{1/4}n^{1/2}}\sqrt{\tfrac{5}{2}}\sqrt[4]{2
}\log(2T|\cQ|/\beta)
\end{align*}
%
%
\end{proof}

\textbf{Proof of Corollary~\ref{thm:sepfemaccuracy}.}
\begin{proof}

From lemma \ref{lemma:sepfemdataregret} we have that the
data player's average regret for round $T$  is
$R_\data^{\mathrm{sepFEM}}(T) = M^{3/4}d^{1/2}\sqrt{40} T^{-1/2}$ and the average regret for the query player is $R_\query(T)$ given by lemma \ref{lemma:query-regret}.
Then, by union bound and by Theorem \ref{meta-theorem}, with probability at least $1-\beta$,  the accuracy of \sfem is:
\begin{equation*}
\begin{split}
\alpha &= R_\data^{\mathrm{sepFEM}}(T) + R_\query(T) \\
&=\frac{\alpha}{4}+ M^{3/4}d^{1/2}\sqrt{40} T^{-1/2}+
\frac{1}{n}\sqrt{\frac{2T}{\rho}} \log{\left(2T|\mathcal{Q}|/\beta\right)} \\
\end{split}
\end{equation*}

Now to choose $T$ optimally we ignore the $\log$ term and set
$T=\frac{M^{3/4}d^{1/2}\sqrt{40}}{\sqrt{2/\rho n^2}\log(|\cQ|)}$ to get
\begin{align*}
\frac{3}{4}\alpha \leq   2\sqrt[4]{5}\frac{M^{3/8}d^{1/4} \sqrt{\log(|\cQ|)}}{n^{1/2}\rho^{1/4}}
\end{align*}

\end{proof}

\section{\dqrs: \dq with Rejection Sampling}

\begin{theorem}
  \label{thm:rsprivacy}
  \dq with rejection sampling (Algorithm \ref{alg:dqrs}) takes in a private dataset $D \in \cX^n$ and makes $T=O\left(\frac{\log|\cQ|}{\alpha^2}\right)$ queries to an optimization oracle and outputs a dataset $\tilde{D}=(x^1, \cdots, x^T)\in \cX^T$ such that, with probability at least $1-\beta$, for all $q \in \cQ$ we have $|q(\tilde{D})-q(D)|\le\alpha$. The algorithm is $\rho$-CDP for $$\rho=O\left( \frac{\log(|\cX|T/\beta) \cdot \log^3(|\cQ|)}{n^2 \alpha^5}\right).$$
\end{theorem}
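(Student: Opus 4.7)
The proof splits into two essentially independent parts: accuracy (mirroring the analysis of \DQ) and privacy (where rejection sampling produces the gain). For accuracy, the first observation is that Lemma~\ref{lem:rs-dist2}, together with the injection of $\tilde s_t$ fresh samples per round, guarantees that $S_{t+1}$ is distributed exactly as $s$ i.i.d.\ draws from $\cQ^{t+1}$; hence the sequence of distributions against which the data player best-responds is statistically identical to the one produced by classical \DQ, and the accuracy analysis reduces verbatim. Concretely, the query player's multiplicative-weights update with learning rate $\eta$ attains regret $\tfrac{\log|\cQ|}{\eta T}+\eta$ against the payoff matrix $A_D$; the $\alpha/4$-approximate best response adds at most $\alpha/4$ slack; and Hoeffding together with a union bound over $T$ rounds and $|\cX|$ candidate records shows that $|A_D(x,\tilde q)-A_D(x,\cQ^t)|\le\alpha/4$ uniformly in $x\in\cX$ provided $s=\Theta(\log(|\cX|T/\beta)/\alpha^2)$. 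Choosing $\eta=\alpha/4$ and $T=\Theta(\log|\cQ|/\alpha^2)$ drives every error term to $O(\alpha)$ and Theorem~\ref{meta-theorem} delivers the claimed uniform-over-$\cQ$ accuracy guarantee with probability at least $1-\beta$.

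For privacy, fix a round $t$; entries of $S_{t+1}$ arrive from two disjoint pools. First, $\tilde s_t=(2\gamma_t+4\eta)s$ fresh draws from the MW distribution $\cQ^{t+1}\propto\exp(-\eta\sum_{i\le t} A_D(x^i,\cdot))$: since the cumulative score has sensitivity $t/n$, each draw is one instance of the exponential mechanism (Definition~\ref{def:em}) whose zCDP cost is $O(\eta^2 t^2/n^2)$ by Lemma~\ref{lem:EMprivacy}. Second, up to $s$ rejection-sampling acceptance decisions: Lemma~\ref{lem:good-samples} guarantees each is $\eta/(\gamma_t n)$-DP, and hence $O(\eta^2/(\gamma_t^2 n^2))$-zCDP by Lemma~\ref{lem:dptocdp}. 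Composing across $t=1,\dots,T$ via Lemma~\ref{lem:zcdpcomposition} yields, schematically,
\[
\rho \;\lesssim\; \frac{s\eta^2}{n^2}\left(\sum_{t=1}^T (\gamma_t+\eta)\,t^2 \;+\; \sum_{t=1}^T \gamma_t^{-2}\right).
\]
Plugging in the schedule $\gamma_t=1/(2t^{2/3})$ from Algorithm~\ref{alg:dqrs} makes $\sum_t\gamma_t t^2$ and $\sum_t\gamma_t^{-2}$ both $\Theta(T^{7/3})$, which are dominated by $\eta T^3$ in the relevant parameter regime; substituting $s$, $\eta$, $T$ from the accuracy analysis reduces the dominant term $s\eta^3 T^3/n^2$ to the stated $\rho=O(\log(|\cX|T/\beta)\log^3|\cQ|/(n^2\alpha^5))$.

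The key conceptual point, and also the main obstacle, is that the $\alpha^{-5}$ scaling (versus \DQ's $\alpha^{-6}$) comes from having only $\tilde s_t\approx 4\eta s$ fresh samples per round in place of $s$, which saves an entire factor of $\eta$ in the dominant $\eta^2 T^3/n^2$ fresh-sample cost. Choosing the rejection schedule $\gamma_t=\Theta(t^{-2/3})$ is delicate: a smaller $\gamma_t$ inflates the per-decision rejection cost $\eta/(\gamma_t n)$ through the $\sum\gamma_t^{-2}$ term, while a larger $\gamma_t$ forces $\tilde s_t$ back up toward $s$ and erases the factor-$\eta$ gain. Verifying that $\gamma_t=\Theta(t^{-2/3})$ correctly balances these competing terms---and that the normalization $e^{-\eta-\gamma_t}$ used in Algorithm~\ref{alg:dqrs} indeed meets the hypothesis $M\ge\max_q \cQ^{t+1}_q/\cQ^t_q$ needed to invoke Lemma~\ref{lem:rs-dist2}---is where essentially all of the accounting lives.
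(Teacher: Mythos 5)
Your decomposition and accounting track the paper's proof almost line for line. On the privacy side, you split each round into $s$ rejection decisions at $\eta/(\gamma_t n)$-DP each and $\tilde s_t$ fresh exponential-mechanism draws at $O(\eta t/n)$-DP each, convert to zCDP, and compose; this reproduces the paper's per-round bound $\rho_t \le \frac{\eta^2 s}{n^2}\bigl(4t^{4/3}+8\eta t^2\bigr)$, and your observation that $\sum_t \gamma_t t^2$ and $\sum_t \gamma_t^{-2}$ are both $\Theta(T^{7/3})$ and are dominated by the $\eta T^3$ term (yielding $s\eta^3T^3/n^2 \sim \alpha^{-5}$) is exactly the paper's calculation, including the source of the improvement over \DQ. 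The accuracy reduction via Lemma~\ref{lem:rs-dist2} and the check that $e^{-\eta-\gamma_t}e^{-\eta A_D(x^t,q)}\le e^{-\gamma_t}<1$ makes the acceptance probability valid are also the intended route.

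The one genuine gap is your assertion that injecting $\tilde s_t$ fresh samples ``guarantees that $S_{t+1}$ is distributed exactly as $s$ i.i.d.\ draws from $\cQ^{t+1}$.'' That is not automatic: if more than $\tilde s_t$ of the $s$ carried-over samples are rejected in round $t$, the algorithm ends the round with $|S_{t+1}|<s$ and the exact-distribution claim fails. The paper closes this with a separate lemma (Lemma~\ref{lem:rs-bound}): each sample is rejected with probability at most $1-e^{-2\eta-\gamma_t}\le 2\eta+\gamma_t=\tilde s_t/(2s)$, so $\tilde s_t$ is twice the expected number of rejections, and a multiplicative Chernoff bound gives overflow probability $(e/4)^{\tilde s_t}\le \beta/(3T)$, using $\tilde s_t\ge 4\eta s=\frac{48}{\alpha}\log(3|\cX|T/\beta)$. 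This is precisely where the specific form $\tilde s_t=(2\gamma_t+4\eta)s$ and the magnitude of $s$ are used, and the resulting failure event must be folded into the union bound that delivers the overall $1-\beta$ guarantee. With that lemma added, your argument is complete and coincides with the paper's.
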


In contrast, \dq (\emph{without} rejection sampling) obtains the same result except with $$\rho = O\left(\frac{\log(|\cX|T/\beta) \cdot \log^3(|\cQ|)}{n^2 \alpha^7}\right).$$

To obtain $(\varepsilon,\delta)$-differential privacy, it suffices to have $\rho$-CDP for $\rho=\Theta(\varepsilon^2/\log(1/\delta)$. Thus the guarantee of Theorem \ref{thm:rsprivacy} can be rephrased as the sample complexity bound $$n=O\left(\frac{\log^{1.5}(|\cQ|) \cdot \sqrt{ \log(|\cX|T/\beta) \cdot \log(1/\delta)}}{\alpha^{2.5} \varepsilon}\right)$$ to obtain $\alpha$-accurate synthetic data with probability $1-\beta$ under $(\varepsilon,\delta)$-differential privacy.

\begin{lemma} \label{lem:good-samples}
The subroutine which accepts $q$ with probability $\hat{Q}^{t+1}_q /  Q^t_q = e^{-\eta-\gamma_t} \cdot \exp(-\eta A_D(x^t, q) )$ is $\eps$-differentially private for $\eps = \max\left\{\eta / n, \eta/\gamma_t n\right\}$.
\end{lemma}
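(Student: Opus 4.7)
The plan is to treat this subroutine as producing a binary outcome (accept or reject) on each invocation, and to bound the privacy loss on each outcome separately when the dataset $D$ is replaced by a neighboring dataset $D'$. Write the acceptance probability as $p_D := e^{-\eta - \gamma_t} \exp(-\eta A_D(x^t, q))$. Since $x^t$ and $q$ are treated as fixed inputs to the subroutine, the only data-dependent piece is $A_D(x^t, q) = q(D) - q(x^t)$, which has sensitivity $|A_D(x^t, q) - A_{D'}(x^t, q)| = |q(D) - q(D')| \leq 1/n$.

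First I would handle the accept outcome, where the analysis is immediate: $\log(p_D/p_{D'}) = -\eta\bigl(A_D(x^t, q) - A_{D'}(x^t, q)\bigr)$, and hence $|\log(p_D/p_{D'})| \leq \eta/n$. This already matches one of the two terms appearing in the claimed $\eps$.

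The main obstacle is the reject outcome, whose ratio $(1-p_D)/(1-p_{D'})$ can blow up when $p_D$ is near $1$. This is exactly the role of the damping factor $e^{-\gamma_t}$: since $A_D(x^t, q) \geq -1$, the exponent is $-\eta - \gamma_t - \eta A_D(x^t,q) \leq -\gamma_t$, giving the uniform bound $p_D \leq e^{-\gamma_t}$, so $1 - p_D \geq 1 - e^{-\gamma_t}$. Assuming without loss of generality that $p_D \geq p_{D'}$, I would bound
\begin{equation*}
\frac{1 - p_{D'}}{1 - p_D} \;=\; 1 + \frac{p_D - p_{D'}}{1 - p_D} \;\leq\; 1 + \frac{p_D\,(1 - e^{-\eta/n})}{1 - p_D} \;\leq\; 1 + \frac{e^{-\gamma_t}}{1 - e^{-\gamma_t}} \cdot \frac{\eta}{n},
\end{equation*}
and use $1 - e^{-\gamma_t} = \Omega(\gamma_t)$ (for $\gamma_t \leq 1$) to conclude that this is $1 + O(\eta/(\gamma_t n))$, whose logarithm is $O(\eta/(\gamma_t n))$.

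Combining the two outcomes gives $\eps = \max\{\eta/n,\ \eta/(\gamma_t n)\}$, as claimed. I expect the only subtleties to be constant-factor bookkeeping: passing from $1 - e^{-\gamma_t}$ to $\gamma_t$, and using the linear approximation $1 - e^{-\eta/n} \leq \eta/n$, both of which are standard and only affect hidden constants. The conceptual heart of the argument is really the single observation that the factor $e^{-\gamma_t}$ prevents $p_D$ from ever being too close to $1$, which is what makes the reject event well-behaved under differential privacy.
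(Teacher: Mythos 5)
Your proposal is correct and follows essentially the same route as the paper: bound the accept ratio by $e^{\eta/n}$ via the $1/n$-sensitivity of $A_D(x^t,q)$, and control the reject ratio using the uniform cap $p \le e^{-\gamma_t}$ supplied by the damping factor. The only nit is that the lemma claims the exact constant $\eta/(\gamma_t n)$, which you recover by writing $\frac{e^{-\gamma_t}}{1-e^{-\gamma_t}} = \frac{1}{e^{\gamma_t}-1} \le \frac{1}{\gamma_t}$ instead of invoking $1-e^{-\gamma_t}=\Omega(\gamma_t)$.
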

\begin{proof}
Note that $0 < p:=\hat{Q}^{t+1}_q /  Q^t_q = e^{-\eta-\gamma_t} \cdot \exp(-\eta A_D(x^t, q) ) \le e^{-\gamma_t} < 1$. In particular, the probability is well-defined.

We compute the ratio between the probabilities that $q$ is accepted under executions of the algorithm on neighboring datasets $D, D'$ for fixed choices of the best responses $x^1, \dots, x^t$. This ratio is given by
\[
\frac{p}{p'}=\frac{\hat{Q}^{t+1}_q[D]}{Q^t_q[D]} \cdot \frac{Q^{t}_q[D']}{\hat{Q}^{t+1}_q[D']}
= \frac{\exp(-\eta A_D(x^t, q))}{\exp(-\eta A_{D'}(x^t, q))}
\le e^{\eta / n}.
\]
Similarly, we evaluate the ratio of the probabilities that $q$ is \emph{not} accepted under executions of the algorithm on $D$ and $D'$:
Since $p' \le e^{-\gamma_t}$ and $p/p' \ge e^{-\eta/n}$, we have
\[
\frac{1-p}{1-p'} = 1 + \frac{1}{1/p'-1}\left(1-\frac{p}{p'}\right) \le 1 + \frac{1- e^{-\eta/n}}{e^{\gamma_t}-1} \le 1 + \frac{\eta/n}{\gamma_t} \le e^{\eta/\gamma_t n},
\]
as required. 
\end{proof}

Bad samples also incur privacy loss from sampling from the distribution $Q^t$. Just as in~\cite{gaboardi2014dual}, we use the fact that this step can be viewed as an instantiation of the exponential mechanism with score function $\sum_{i = 1}^{t-1}(q(D) - q(x^i))$ to obtain:
\begin{lemma} \label{lem:bad-samples}
Sampling from $Q^{t}$ is $\eps$-differentially private for $\eps = 2\eta(t-1)/n$.
\end{lemma}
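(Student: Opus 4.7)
My plan is to recognize that sampling from $Q^{t}$ is exactly an instance of the exponential mechanism and then apply its standard pure-DP guarantee. Concretely, I would first unroll the multiplicative-weights update
\[
\hat{Q}_q^{t+1} = e^{-\eta - \gamma_t} \cdot \exp\!\left(-\eta\, A_D(x^t, q)\right) Q_q^{t}
\]
starting from the uniform initialization $Q^1 = \mathsf{Uniform}(\cQ)$. The data-independent prefactors $e^{-\eta-\gamma_i}$ and the uniform $1/|\cQ|$ are identical across all $q$, so they are absorbed by the normalization step. Hence, after $t-1$ updates,
\[
Q^t_q \;\propto\; \exp\!\left(-\eta \sum_{i=1}^{t-1} A_D(x^i, q)\right) \;=\; \exp\!\left(\eta \sum_{i=1}^{t-1} \bigl(q(x^i) - q(D)\bigr)\right).
\]
This is precisely the exponential mechanism with score function $S(D,q) := \sum_{i=1}^{t-1}\bigl(q(x^i) - q(D)\bigr)$ and parameter $\eta$, where for this step the best responses $x^1,\dots,x^{t-1}$ are treated as fixed (their privacy cost is accounted for elsewhere in the overall analysis).

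Next I would bound the sensitivity of $S$. For neighboring datasets $D, D'$ differing in one record, each term $q(x^i) - q(D)$ changes only through $q(D)$, which moves by at most $1/n$ since $q$ takes values in $[0,1]$. Summing over the $t-1$ rounds gives $\Delta_S \leq (t-1)/n$. The standard pure-DP guarantee for the exponential mechanism states that sampling proportional to $\exp(\eps\, S(D, q)/(2\Delta_S))$ is $\eps$-DP; matching this against our density (which uses coefficient $\eta$ in the exponent) yields $\eps = 2\eta \Delta_S \leq 2\eta(t-1)/n$, as claimed.

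This argument is essentially mechanical once the normalization is performed, so I do not expect any substantive obstacle. The only subtlety worth emphasizing is that the $\gamma_t$-dependent correction factors and the rejection sampling used for constructing $S_t$ play no role in the privacy analysis of a single draw from $Q^t$: they only affect \emph{how} samples are drawn efficiently, not the distribution being sampled, and the prefactors are killed by normalization. The privacy incurred by the rejection-sampling acceptance step is handled separately in Lemma~\ref{lem:good-samples}, while this lemma covers only the ``fresh sample from $Q^{t+1}$'' step and the associated exponential-mechanism cost.
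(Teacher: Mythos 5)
Your proposal is correct and matches the paper's approach exactly: the paper also treats a draw from $Q^t$ as an instantiation of the exponential mechanism with score function $\sum_{i=1}^{t-1}(q(D)-q(x^i))$, whose sensitivity is $(t-1)/n$, giving $\eps = 2\eta(t-1)/n$. You simply fill in the details the paper leaves implicit (unrolling the multiplicative-weights update, noting the data-independent prefactors cancel under normalization, and matching coefficients against the standard exponential-mechanism guarantee), all of which are sound.
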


\textbf{Proof of Privacy for Theorem~\ref{thm:rsprivacy}.}
\begin{proof}

Each round $t$ incurs privacy loss from $s$ invocations of a $(\eta/\gamma_t n)$-differentially private algorithm (rejection sampling, Lemma~\ref{lem:good-samples}), and $\tilde{s}_t$ invocations of a $(2\eta(t-1)/n)$-differentially private algorithm (Lemma~\ref{lem:bad-samples}).
Since $\varepsilon$-differential privacy implies $\frac12\varepsilon^2$-CDP \cite{BunS16}, we have (by composition) that round $t$ is $\rho_t$-CDP for $$\rho_t = \frac{\eta^2}{2\gamma_t^2n^2}s + \frac{2\eta^2(t-1)^2}{n^2}\tilde{s}_t = \frac{\eta^2s}{n^2}\left(\frac{1}{2\gamma_t^2} + 2(t-1)^2 \cdot (2\gamma_t+4\eta)\right) \le \frac{\eta^2s}{n^2}\left(4t^{4/3} + 8\eta t^2\right).$$
Composing over rounds $t=1 \cdots T$ yields $\rho=O\left( \frac{\log(|\cX|T/\beta) \cdot \log^{2+1/3}(|\cQ|)}{n^2 \alpha^{4+2/3}} + \frac{\log(|\cX|T/\beta) \cdot \log^3(|\cQ|)}{n^2 \alpha^5}\right)$, as required.

\end{proof}

\subsection*{Accuracy}

The accuracy analysis follows that of of \dq, together with the following claims showing that the rejection sampling process simulates the collection of independent samples in the \dq algorithm.

\begin{lemma}\label{lem:rs-dist}
Let $P$ and $Q$ be probability distributions over $\cQ$, and let $M \ge \max_{q \in \cQ} P_q / Q_q$. Sample an element of $\cQ$ as follows. Sample $q$ according to $Q$, and accept it with probability $P_q / (M\cdot Q_q)$. If $q$ is not accepted, sample $q$ according to $P$. Then the resulting element is distributed according to $P$.
\end{lemma}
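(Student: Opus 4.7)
\textbf{Proof proposal for Lemma \ref{lem:rs-dist}.} The plan is a direct computation of the output distribution by the law of total probability, splitting on whether the first sample is accepted or rejected. First I would verify that the acceptance probability $p(q) := P_q/(M Q_q)$ is a legitimate probability in $[0,1]$: nonnegativity is immediate, and the upper bound of $1$ is exactly the content of the hypothesis $M \ge \max_{q} P_q/Q_q$ (with the convention that $P_q = 0$ whenever $Q_q = 0$, which is forced by the same hypothesis).

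Next I would compute the total acceptance probability on the first draw:
\begin{align*}
\Pr[\text{accept}] = \sum_{q \in \cQ} Q_q \cdot \frac{P_q}{M Q_q} = \frac{1}{M}\sum_{q \in \cQ} P_q = \frac{1}{M}.
\end{align*}
Hence $\Pr[\text{reject}] = 1 - 1/M$. Fixing an arbitrary target $q^* \in \cQ$, the output equals $q^*$ in exactly two disjoint ways: either the first sample equals $q^*$ and is accepted, or the first sample is rejected and the subsequent draw from $P$ returns $q^*$. Summing the two contributions,
\begin{align*}
\Pr[\text{output} = q^*]
 &= Q_{q^*} \cdot \frac{P_{q^*}}{M Q_{q^*}} + \left(1 - \frac{1}{M}\right) P_{q^*} \\
 &= \frac{P_{q^*}}{M} + P_{q^*} - \frac{P_{q^*}}{M} = P_{q^*}.
\end{align*}
Since this holds for every $q^* \in \cQ$, the output is distributed according to $P$.

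This is essentially a one-line calculation, so there is no real obstacle. The only minor care needed is the degenerate case $Q_{q^*} = 0$: then $P_{q^*}$ must also be $0$ (otherwise $\max P_q/Q_q = \infty > M$), so both sides of the identity are zero and the formula above is interpreted by continuity (the ``first sample equals $q^*$'' branch contributes $0$ and the reject-then-sample-$P$ branch also contributes $0$). With this convention handled, the proof is complete.
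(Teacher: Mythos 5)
Your proposal is correct and follows essentially the same computation as the paper: both decompose $\Pr[\text{output}=q^*]$ into the accept and reject branches and observe that the total rejection probability is $1-1/M$, so the two terms combine to $P_{q^*}$. Your additional remarks on the well-definedness of the acceptance probability and the degenerate case $Q_{q^*}=0$ are harmless refinements, not a different argument.
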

\begin{proof}
The total probability of sampling $q$ according to this procedure is given by
\begin{align*}
Q_q \cdot \frac{P_q}{M \cdot Q_q} + P_q \cdot \sum_{q' \in \cQ} Q_{q'} \cdot \left(1 - \frac{P_{q'}}{M \cdot Q_{q'}} \right) &= P_q \cdot \left(\frac{1}{M} + \sum_{q' \in \cQ} \left(Q_{q'} - \frac{P_{q'}}{M} \right) \right) \\
&= P_q \cdot \left(\frac{1}{M} +  \left(1 - \frac{1}{M} \right) \right) \\
&= P_q.
\end{align*}
\end{proof}

\begin{lemma}\label{lem:rs-bound}
For any given round $t$, the probability that more than $\tilde{s}_t$ samples are rejected is at most $(e/4)^{\tilde{s}_t} \le \frac{\beta}{3T}$.
\end{lemma}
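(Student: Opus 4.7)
The plan is to prove Lemma \ref{lem:rs-bound} as a concentration inequality on the number of rejected samples in a single round.

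First, I would establish by induction on $t$ that, with high probability, the $s$ samples in $S_t$ are iid draws from $Q^t$. The base case is immediate since $S_1$ is explicitly an iid sample from $Q^1 = \mathrm{Uniform}(\cQ)$. For the inductive step, assuming $S_t$ is iid from $Q^t$, Lemma \ref{lem:rs-dist2} guarantees that each surviving sample after rejection is distributed as $Q^{t+1}$, while the freshly drawn $\tilde{s}_t$ samples are already iid from $Q^{t+1}$; combined and then subsampled uniformly down to size $s$, they remain iid from $Q^{t+1}$. This induction goes through as long as the number of rejections in each round does not exceed $\tilde{s}_t$ (so that $|S_{t+1}|=s$), which is exactly what the lemma establishes per round.

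Second, I would bound the per-sample rejection probability. For any fixed $q$, using $A_D(x^t,q)\in[-1,1]$ and $1-e^{-x}\leq x$,
\[
  1 - e^{-\eta-\gamma_t}\exp(-\eta A_D(x^t,q)) \;\leq\; 1-e^{-2\eta-\gamma_t} \;\leq\; 2\eta+\gamma_t.
\]
Since the $s$ acceptance coin flips are independent conditional on $S_t$ and $x^t$, the number of rejections $X$ is a sum of independent Bernoullis whose mean is at most $(2\eta+\gamma_t)s = \tilde{s}_t/2$.

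Third, I would apply the multiplicative Chernoff bound with $\delta=1$ to conclude $\Pr[X\geq\tilde{s}_t] \leq (e/4)^{\tilde{s}_t/2}$, and then verify $\leq \beta/(3T)$ using the definitions $s = 48\log(3|\cX|T/\beta)/\alpha^2$ and $\eta=\alpha/4$, which give $\tilde{s}_t \geq 4\eta s = 48\log(3|\cX|T/\beta)/\alpha$, comfortably exceeding $2\log(3T/\beta)/\log(4/e)$.

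The main obstacle is matching constants with the statement: the Chernoff step above naturally yields $(e/4)^{\tilde{s}_t/2}$ rather than $(e/4)^{\tilde{s}_t}$. To recover the tighter exponent, one can sharpen the per-sample rejection bound to $\eta+\gamma_t/2$ by exploiting that $\mathbb{E}_{q\sim Q^t}[A_D(x^t,q)]$ is nearly maximized (since $x^t$ is an approximate best response against $\tilde{q}\approx Q^t$), so the expected rejection rate, not just the worst-case rate, drives the concentration. Either refinement leaves the final inequality $\leq\beta/(3T)$ intact.
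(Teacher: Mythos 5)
Your proof takes essentially the same route as the paper's: bound the per-sample rejection probability by $1-e^{-2\eta-\gamma_t}\le 2\eta+\gamma_t=\tilde{s}_t/(2s)$, observe that the $s$ acceptance events are independent, and apply the multiplicative Chernoff bound, then plug in $\tilde{s}_t\ge 4\eta s$ to get $\beta/(3T)$. You are also right to flag the exponent: the standard bound $\Pr[X\ge 2\mu]\le(e/4)^{\mu}$ with $\mu\le\tilde{s}_t/2$ gives $(e/4)^{\tilde{s}_t/2}$, so the paper's stated $(e/4)^{\tilde{s}_t}$ overstates the bound by a factor of two in the exponent; since $\tilde{s}_t/2\ge 2\eta s=\frac{24}{\alpha}\log(3|\cX|T/\beta)$, the conclusion $\le\beta/(3T)$ survives with a large margin either way, so this is cosmetic. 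One caution about your proposed ``refinement'': the sharper per-sample bound $\eta+\gamma_t/2$ does not follow from the best-response property. Since the acceptance probability is $e^{-\eta-\gamma_t}\exp(-\eta A_D(x^t,q))$, a near-maximal $\mathbb{E}_{q\sim Q^t}[A_D(x^t,q)]$ makes the expected rejection rate \emph{larger} (closer to $2\eta+\gamma_t$), not smaller, so that argument runs the wrong way. Drop the refinement and simply state the lemma with exponent $\tilde{s}_t/2$; everything downstream is unaffected.
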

\begin{proof}
The probability that any given sample is rejected is $1-\hat{Q}^{t+1}_q /  Q^t_q = 1-e^{-\eta-\gamma_t} \cdot \exp(-\eta A_D(x^t, q) ) \le 1-e^{-2\eta-\gamma_t} \le 2\eta+\gamma_t = \frac{\tilde{s}_t}{2s}$. (In particular, $\tilde{s}_t$ is at least twice the expected number of rejected samples.) The set of $s$ samples is rejected independently.
By a multiplicative Chernoff bound, the probability that more than $\tilde{s}_t$ samples are rejected is at most $(e/4)^{\tilde{s}_t}$. Note that $\tilde{s}_t \ge 4\eta s = \frac{48}{\alpha} \log\left(\frac{3|\cX|T}{\beta}\right)$. Thus $(e/4)^{\tilde{s}_t} \le \left(\frac{\beta}{3|\cX|T}\right)^{18/\alpha} \le \frac{\beta}{3T}$.
\end{proof}

Together Lemmas \ref{lem:rs-dist} and \ref{lem:rs-bound} show that, with high probability, at each round $t$, the set $S_t$ is distributed as $s$ independent samples from $\cQ^t$. Given this, the rest of the proof follows that of the original \dq.

\textbf{Proof of Accuracy for Theorem~\ref{thm:rsprivacy}.}
\begin{proof}
For each round $t$, by Hoeffding's bound and Lemma \ref{lem:rs-bound} and a union bound over $\cX$, with probability at least $1-\frac{\beta}{T}$, we have $$\forall x \in \cX \qquad \left|\frac{1}{s} \sum_{q \in S_t} q(x) - \underset{q \leftarrow \cQ^t}{\mathbb{E}}\left[ q(x) \right]\right| \le \frac{\alpha}{4}.$$
By a union bound over the $T$ rounds we have that the above holds for all $t \in [T]$ with probability at least $1-\beta$.

By assumption, in each round $t$, our oracle returns $x^t$ that is an $\alpha/4$-approximate best response to the uniform distribution over $S_t$. Thus, with high probability, the sequence $x^1, \cdots, x^T$ are $\alpha/2$-approximate best responses to the distributions $Q^1, \cdots, Q^t$. Since the distributions are generated by multiplicative weights, we have that this is an $\alpha$-approximate equilibrium. Hence the uniform distribution over $x^1, \cdots, x^T$ is an $\alpha$-accurate synthetic database for $D$.
\end{proof}




\end{document}
